\newtheorem{corollary}{Corollary}
\newtheorem{proposition}{Proposition}
\title{Geometric adaptive Monte Carlo in random environment}
\date{}
\begin{document}
\maketitle

% Enter the first author's name and address:
\centerline{\scshape Theodore Papamarkou$^{1,2}$,
Alexey Lindo$^{2}$,
Eric B. Ford$^{3,4,5,6}$
}

\medskip

{\footnotesize
% please put the address of the first author
\centerline{$^{1}$Department of Mathematics}
\centerline{The University of Manchester, Manchester, M13 9PL, UK}
} % Do not forget to end the {\footnotesize by the sign }

\medskip

{\footnotesize
 % please put the address of the second  and third author
\centerline{$^{2}$School of Mathematics and Statistics}
\centerline{University of Glasgow, Glasgow, G12 8QQ, UK}
}

\medskip

{\footnotesize
% please put the address of the second  and third author
\centerline{$^{3}$Department of Astronomy and Astrophysics}
\centerline{525 Davey Laboratory, The Pennsylvania
State University, University Park, PA, 16802, USA}
% \centerline{The Pennsylvania State University, Pennsylvania, USA}
}

\medskip

{\footnotesize
% please put the address of the second  and third author
\centerline{$^{4}$Center for Exoplanets and Habitable Worlds}
\centerline{525 Davey Laboratory, The Pennsylvania
State University, University Park, PA, 16802, USA}
}

\medskip

{\footnotesize
\centerline{$^{5}$Center for Astrostatistics}
\centerline{525 Davey Laboratory, The Pennsylvania
State University, University Park, PA 16802, USA}
}

\medskip

{\footnotesize
\centerline{$^{6}$Institute for Computational and Data Sciences}
\centerline{525 Davey Laboratory, The Pennsylvania
State University, University Park, PA 16802, USA}
}

\bigskip

%% Enter the first author's name and address:
%\centerline{\scshape Theodore Papamarkou$^*$}
%\medskip
%{\footnotesize
%% please put the address of the first author
% \centerline{Department of Mathematics}
%   \centerline{The University of Manchester}
%   \centerline{Manchester, UK}
%} % Do not forget to end the {\footnotesize by the sign }
%
%\medskip
%
%\centerline{\scshape Alexey Lindo}
%\medskip
%{\footnotesize
% % please put the address of the second  and third author
% \centerline{School of Mathematics and Statistics}
%   \centerline{University of Glasgow}
%   \centerline{Glasgow, UK}
%}
%
%\medskip
%
%\centerline{\scshape Eric B. Ford}
%\medskip
%{\footnotesize
%	% please put the address of the second  and third author
%	\centerline{Department of Astronomy and Astrophysics}
%	\centerline{The Pennsylvania State University}
%	\centerline{Pennsylvania, USA}
%}
%
%\bigskip

% The name of the associate editor will be entered by an editorial staff
% "Communicated by the associate editor name" is not needed for special issue.
%\centerline{(Communicated by the associate editor name)}

%The abstract of your paper
\begin{abstract}
Manifold Markov chain Monte Carlo algorithms have been introduced to sample 
more effectively from challenging target 
densities exhibiting multiple modes or strong correlations. Such algorithms 
exploit the local geometry of the parameter 
space, thus enabling chains to achieve a faster convergence rate when measured 
in number of steps.
However, acquiring 
local geometric information
can often increase computational complexity per step to the extent that 
sampling from high-dimensional 
targets becomes inefficient in terms of total computational time.
This paper analyzes the computational complexity of manifold Langevin Monte 
Carlo and proposes a geometric adaptive Monte
Carlo sampler aimed at balancing the benefits of exploiting local geometry with 
computational cost to achieve a high 
effective sample size for a given computational cost.
The suggested sampler is a discrete-time stochastic process in random 
environment.
The random environment allows to switch between local geometric and adaptive 
proposal kernels with the help of a schedule.
An exponential schedule is put forward that enables more frequent use of 
geometric information in early 
transient phases of the chain, while saving computational time in late 
stationary phases. 
The average complexity can be manually set depending on the need for geometric 
exploitation posed by the underlying model.
\end{abstract}

\section{Introduction}

Geometric Markov chain Monte Carlo (MCMC) dates back to the work of 
\cite{dua_ken_pen__hyb},
which introduced Hamiltonian Monte Carlo (HMC) to unite MCMC with molecular 
dynamics.
%\cite{dua_ken_pen__hyb} applied HMC in lattice field theory simulations of 
%quantum chromodynamics.
Statistical applications of HMC began with its use in neural network models by 
\cite{nea__bay}.

In the meanwhile, the Metropolis-adjusted Langevin algorithm (MALA) was 
proposed by \cite{rob_ros__opt} to employ Langevin 
dynamics for MCMC sampling. Both HMC and MALA evaluate the gradient of the 
target density, so 
they utilize local geometric flow.

\cite{gir_cal__rie} introduced new differential geometric MCMC methods. Given a
state $\theta\in\mathbb{R}^{n}$, \cite{gir_cal__rie} defines a distance between 
two probability 
densities $p(\theta)$ and $p(\theta+\delta\theta)$ as the quadratic form
$\delta\theta^T M(\theta) \delta\theta$ for an arbitrary metric 
$M(\theta)$. Thus, the position-specific metric $M(\theta)$ induces a Riemann 
manifold in the 
space of parameterized probability density functions 
$\left\{p(\theta):\theta\right\}$.
%In the context of MCMC, $G(\theta)$ depends on the current state $\theta$ of 
%the simulated Markov 
%chain.
\cite{gir_cal__rie} uses $M(\theta)$ to define proposal kernels that explore 
the state space
$\left\{\theta:\theta\in\mathbb{R}^{n}\right\}$ effectively by introducing 
Riemann 
manifold Langevin and Hamiltonian Monte Carlo methods.

Computing the geometric entities of differential geometric MCMC methods creates 
a performance bottleneck that restricts the 
applicability of the involved methods. For example, manifold MCMC methods 
require to calculate the metric tensor
$M(\theta)$ of choice. Typically, $M(\theta)$ is set to be the observed Fisher 
information matrix,
which equals the negative Hessian of the log-target density at state $\theta$. 
Consequently, the 
complexity of manifold MCMC algorithms is dominated by Hessian-related 
computations,
such as the gradient of or the inverse of the Hessian.

\cite{gir_cal__rie} constructed the simplified manifold Metropolis-adjusted 
Langevin algorithm (SMMALA) that is of the same 
order of complexity per Monte Carlo iteration but faster than MMALA and RMHMC 
for target distributions of low complexity. 
The faster speed of SMMALA over MMALA and RMHMC is explained by lower order 
terms and constant factors appearing in big-oh 
notation, which are ordinarily omitted but affect runtime in the case of less 
costly targets.

SMMALA has 
been used in conjunction with population MCMC for the Bayesian analysis of 
mechanistic models based on systems of 
non-linear differential equations, see \cite{cal_gir__sta,sch_pap__ews}. 
Despite the capacity of SMMALA to 
exploit local geometric information so as to cope with non-linear correlations 
and modest increase in the complexity of the 
target density, in case of more expensive targets its computational complexity 
can render performance inferior to other 
algorithms such as the Metropolis-adjusted Langevin algorithm (MALA) or 
adaptive MCMC, see \cite{cal_eps_sil__bay}.

Various attempts have been made to ameliorate the computational implications of 
geometric MCMC methods. Along these lines, 
\cite{lan_tha_chr__emu} used Gaussian processes to emulate the Hessian matrix 
and Christoffel symbols associated with 
the observed Fisher information $M(\theta)$. \cite{sim_bad_cem__sto} developed 
a stochastic quasi-Newton 
Langevin Monte Carlo algorithm which takes into account the local geometry, 
while approximating the inverse Hessian by using 
a limited history of samples and their gradients. Alternatively, 
\cite{per__prox} used convex analysis and proximal 
techniques instead of differential calculus in order to construct a Langevin 
Monte Carlo method for high-dimensional target 
distributions that are log-concave and possibly not continuously differentiable.

The present paper serves two purposes.
Initially, it studies the computational complexity of geometric Langevin Monte 
Carlo algorithms.
Subsequently, it develops the so-called geometric adaptive Monte Carlo (GAMC) 
sampling scheme based on a random environment of geometric and adaptive 
proposal kernels.
For expensive targets, a carefully selected random environment of a local 
geometric Langevin kernel and of adaptive 
Metropolis kernels can give rise to a sampler with higher sampling efficacy 
than a sampler based on the local geometric 
Langevin kernel alone.

\section{Background}
\label{background}

The role of this section is to provide a brief overview of Langevin Monte Carlo 
and of adaptive Metropolis, which will
be combined in later sections to construct a Monte Carlo sampling method in 
random environment.

% To establish notation, let
%$p:\mathbb{R}^{n}\rightarrow \mathbb{R}_{+}$ be a possibly unnormalized target 
%probability density function
To establish notation,
consider a Polish space $E$ equipped with the Borel $\sigma$-algebra
$\mathcal{E}$.
Let $\{\theta_k\}$ be a sequence in $E$
and
\begin{equation*}
\theta_{w:z}:=\left(\theta_{w},\theta_{w+1},\ldots,\theta_{z}\right),~w\le z,
\end{equation*}
a subsequence of $\{\theta_k\}$, where $k$, $w$ and $z$ denote non-negative 
integers.
Moreover, let
$Q\colon E^{z-w+1}\times\mathcal{E}\rightarrow\mathbb{R}_{+}$
be a kernel
from $E^{z-w+1}$ to $E$.

It is assumed that for every subsequence $\theta_{w:z}$
of $\{\theta_k\}$, measure $Q(\theta_{w:z},\cdot)$ is absolutely continuous
with respect to some measure $\nu$
on $(E^{z-w+1},\mathcal{E}^{z-w+1})$.
Such an assumption ensures existence of the Radon-Nikodym derivative 
$q_{\theta_{w:z}}$ so that
\begin{equation}
\label{Q}
Q(\theta_{w:z}, B)=
\int_{B}q_{\theta_{w:z}}d\nu
\end{equation}
for any $B\in\mathcal{E}$.

% Finally, let $p\colon E\rightarrow \mathbb{R}_{+}$ denote a possibly 
%unnormalized target density.
The Radon-Nikodym derivative $q_{\theta_{w:z}}$ is used in the paper as a 
proposal density in geometric or adaptive 
Monte Carlo methods to sample from a possibly unnormalized target density 
$p\colon E\rightarrow \mathbb{R}_{+}$.
In the context of Monte Carlo sampling, $\{\theta_k\}$ and $Q$ are called chain 
and proposal kernel, respectively.

\subsection{Basics of Langevin Monte Carlo}

Langevin Monte Carlo (LMC) is a case of Metropolis-Hastings.
The normal proposal density at the $k$-th iteration of LMC is given by
\begin{equation}
\label{langevin_proposal}
g_{\theta_k}(\theta^{\star}) :=
\mathcal{N}(\theta^{\star}|\mu(\theta_k, M, \epsilon), \Sigma(M, \epsilon)),
\end{equation}
where $\theta_k$ and $\theta^{\star}$ denote the state at the $k$-th iteration 
and the proposed state, respectively.
$M$ is a positive definite matrix of size $n \cdot n$ and
$\epsilon$ refers to a tuning parameter known as the
integration stepsize. The location $\mu(\theta_k, M, \epsilon)$ is a function 
of 
$\theta_k$, $M$ and $\epsilon$, whereas the covariance $\Sigma(M, \epsilon)$ of 
the proposal kernel depends on 
$M$ and $\epsilon$. Both $\mu(\theta_k, M, \epsilon)$ and $\Sigma(M, \epsilon)$ 
are defined so that 
the proposed states admit a Langevin diffusion approximated by a first-order 
Euler discretization.
The Metropolis-Hastings acceptance probability is set to its standard form
\begin{equation}
\label{langevin_acceptance}
r_{g}(\theta_k,\theta^{\star}) :=
\min\left\{
\frac{p(\theta^{\star})g_{\theta^{\star}}(\theta_k)}
{p(\theta_k)g_{\theta_k}(\theta^{\star})}
, 1
\right\}
\end{equation}
if $p(\theta_k)g_{\theta_k}(\theta^{\star})>0$, and 
$r_{g}(\theta,\theta^{\star}):=1$ otherwise.

The proposal kernel $G(\theta_k,\cdot)$ corresponding to the normal density 
$g_{\theta_k}$ of equation
\eqref{langevin_proposal} is defined by setting
$E=\mathbb{R}^n$, $w=z=k$, $q_{\theta_{w:z}}=g_{\theta_k}$
and $\nu$ to be the Lebesgue measure in equation \eqref{Q}.

The integration stepsize $\epsilon$, also known as drift step, is associated 
with the first order Euler discretization 
and significantly affects the rate of state space exploration. If $\epsilon$ is 
selected to be relatively large, many 
of the proposed candidates will be far from the current state, and are likely 
to have a low probability of acceptance, 
so the chain with proposal kernel $G$ will have low acceptance rate. Reducing 
$\epsilon$ will increase the acceptance 
rate, but the chain will take longer to traverse the state space.

In a Bayesian setting, the target is a possibly unnormalized posterior density 
$p(\cdot|y)$,
where $y$ denotes the available data. Replacing $p(\cdot)$ by $p(\cdot|y)$ in
\eqref{langevin_acceptance} makes Langevin Monte Carlo applicable in Bayesian 
problems.

To fully specify a Langevin Monte Carlo algorithm, the location $\mu(\theta_k, 
M, \epsilon)$ and 
covariance $\Sigma(M, \epsilon)$ of normal proposal \eqref{langevin_proposal} 
need to be defined. In what follows, 
variations of geometric Langevin Monte Carlo methods are distinguished by their 
respective proposal location and covariance.

\subsection{Metropolis-adjusted Langevin algorithm}
\label{mala_section}

If $M$ is not a function of the state $\theta_k$, the
Metropolis-adjusted Langevin algorithm (MALA) arises as
\begin{align}
%\begin{split}
\mu(\theta_k, M, \epsilon) & = 
\theta_k+\frac{\epsilon^2}{2}M^{-1}\nabla\log{p(\theta_k)},
\label{mala_location}\\
%\label{mala_covariance}
\Sigma(M, \epsilon) & = \epsilon^2 M^{-1}.
\label{mala_covariance}
%\end{split}
\end{align}
$M$ is known as the precondition matrix, see \cite{rob_stra__lan}. It is 
typically set to be the identity matrix $M=I$, in
which case MALA is defined in its conventional form.
% \begin{eqnarray*}
% % \label{mala_location_covariance_identity}
% \mu(\theta, \epsilon) & = &
% \theta+\frac{\epsilon^2}{2}\nabla_{\theta}\log{p(\theta)},\\
% \Sigma(\epsilon) & = & \epsilon^2 I.
% \end{eqnarray*}

MALA uses the gradient flow $\nabla\log{p(\theta)}$ to make proposals 
effectively. According to the 
theoretical analysis of \cite{rob_ros__opt}, the optimal scaling $\epsilon$ has 
been found to be the value of $\epsilon$ 
which yields a limiting acceptance rate of $57.4\%$ in high-dimensional 
parametric spaces (as $n\rightarrow\infty$).

\subsection{Manifold Langevin Monte Carlo}
\label{smmala_section}

It is possible to incorporate further geometric structure in the form of a 
position-dependent metric 
$M(\theta_k)$, see \cite{gir_cal__rie,xif_she_liv__lan}. The Langevin diffusion 
is defined on a 
Riemann manifold endowed by the metric $M(\theta_k)$.
At the $k$-th iteration of the associated manifold Metropolis-adjusted Langevin 
algorithm (MMALA), candidate states are drawn from a normal proposal density 
with location and covariance given by
\begin{align}
\mu(\theta_k, M(\theta), \epsilon)
&= 
\theta_k+\displaystyle\frac{\epsilon^2}{2}M^{-1}(\theta_k)\nabla\log{p(\theta_k)}
+
\epsilon^2\boldsymbol{\gamma}(\theta_k),
\label{mmala_location}\\
\Sigma(M(\theta_k), \epsilon)
&= \epsilon^2 M^{-1}(\theta_k),
\label{mmala_covariance}
\end{align}
where the $i$-th coordinate $\gamma_i(\theta_k)$ of 
$\boldsymbol{\gamma}(\theta_k)\in\mathbb{R}^{n}$ is
\begin{equation}
\label{mmala_gamma}
\begin{split}
\gamma_i(\theta_k) &=
\displaystyle\frac{1}{2}\sum_{j=1}^{n}
\frac{\partial M^{-1}_{ij}(\theta_k)}{\partial (\theta_{k})_{j}} \\
&=
-\displaystyle\frac{1}{2}
\sum_{j,h,l=1}^{n}
% \sum_{h=1}^{n}
% \sum_{l=1}^{n}
M^{-1}_{ih}(\theta_k)
\frac{\partial M_{hl}(\theta_k)}{\partial(\theta_{k})_{j}}
M^{-1}_{lj}(\theta_k).
\end{split}
\end{equation}
$(\theta_k)_j$, $M_{hl}(\theta_k)$ and $M^{-1}_{ij}(\theta_k)$ in 
\eqref{mmala_gamma} denote the respective
$j$-th coordinate of $\theta_k$, $(h, l)$-th element of $M(\theta_k)$ and $(i, 
j)$-th element of 
$M^{-1}(\theta_k)$.

As seen from \eqref{mmala_gamma}, the term $\boldsymbol{\gamma}(\theta_k)$ 
increases the computational complexity
of operations on
the proposal density for target densities with high number $n$ of dimensions or 
with high correlation between
parameters. To reduce the computational cost, $\boldsymbol{\gamma}(\theta_k)$ 
can be dropped from  
\eqref{mmala_location}, simplifying the proposal location to
\begin{equation}
\label{smmala_location}
\mu(\theta_k, M(\theta_k), \epsilon) =
\theta_k+
\frac{\epsilon^2}{2}M^{-1}(\theta_k)\nabla\log{p(\theta_k)}.
\end{equation}
The method with location and covariance specified by \eqref{smmala_location} 
and \eqref{mmala_covariance} is known as 
simplified Metropolis-adjusted Langevin algorithm (SMMALA).

The optimal stepsize $\epsilon$ for MMALA and SMMALA is empirically suggested 
by \cite{gir_cal__rie} to be set so as to 
obtain an acceptance rate of around $70\%$; this choice has not been analyzed 
yet from a theoretical standpoint analogously 
to the choice of scaling for MALA by \cite{rob_ros__opt}.

\subsection{Synopsis of Langevin Monte Carlo algorithms}

The proposal mechanisms of MALA, SMMALA and MMALA define valid MCMC methods 
that converge to the target distribution. In
fact, if the metric $M(\theta_k)$ is constant, then 
$\boldsymbol{\gamma}(\theta_k)$ vanishes, so
each of SMMALA and MMALA coincides with pre-conditioned MALA.

Each of the three Langevin Monte Carlo samplers incorporate different amount of 
local geometry in the proposal 
mechanism. MALA makes use only of the gradient of the log-target. SMMALA relies 
additionally on the position-specific 
metric tensor $M(\theta_k)$. MMALA takes into account the curvature of the 
manifold induced by $M(\theta_k)$, which 
implies calculating the metric derivatives $\partial 
M_{hl}(\theta_k)/\partial(\theta_{k})_j$ in \eqref{mmala_gamma}. 
Depending on % the characteristics of
the manifold and its curvature, the 
proposals of the three Langevin Monte Carlo 
samplers may exhibit varying efficiency in converging to the target 
distribution, thus leading to differences in 
effective sample size.

Increasing inclusion of local geometry in the proposal mechanism escalates 
computational complexity.
More specifically,
MALA, SMMALA and MMALA require computing first, second and third order 
derivatives of 
the target. It is thus clear that there is a trade-off between geometric 
exploitation of the target from within the 
proposal density and associated complexity of the proposal density,
which translates to a trade-off between effective sample size and 
runtime for MALA, SMMALA and MMALA.

\subsection{Basics of adaptive Metropolis}

Sampling methods that propose samples by using past values of the chain, thus 
breaking the Markov property, are 
referred to as adaptive Monte Carlo.
The first adaptive Metropolis (AM) algorithm, as introduced by 
\cite{haa_sak_tam__ana}, used a proposal
kernel based on the empirical covariance matrix of the whole chain at each 
iteration.

In its first appearance in \cite{haa_sak_tam__ana}, the AM algorithm was 
defined for target densities of bounded 
support to ensure convergence. \cite{rob_ros__exa} extended AM to work with 
targets of unbounded support 
by suggesting a mixture proposal density also based on the empirical covariance 
matrix of the whole chain at each 
iteration.

Several variations of the AM algorithm have appeared.
For example, \cite{haa_lai_mir__dra} combined adaptive Metropolis and delayed 
rejection methodology to construct the
delayed rejection adaptive Metropolis (DRAM) sampler, which outperforms its 
constituent methods in certain situations.
More recently, \cite{vih__rob} introduced the so-called robust
adaptive Metropolis (RAM) algorithm, which scales the empirical covariance 
matrix of the chain to yield a desired mean 
acceptance rate, typically $23.4\%$ in multidimensional settings.
A thorough overview of adaptive Metropolis methods can be found in 
\cite{gri_wal__ona}.

The ergodicity properties of adaptive Monte Carlo were studied in 
\cite{and_mou__ont,rob_ros__cou}.
In particular, \cite{rob_ros__cou} defined the diminishing adaptation and 
bounded convergence conditions. Their joint
satisfaction ensures asymptotic convergence to the target distribution. Thus, 
diminishing adaptation and 
bounded convergence provide a useful machinery for constructing adaptive Monte 
Carlo algorithms.

\subsection{The first adaptive Metropolis algorithm}

% In the following, a sequence $\{\theta_i\}$ generated by AM will be called a 
%chain.
Consider a chain $\theta_{0:k}$ up to iteration $k$ generated by AM.%, with 
%$i$-th state $\theta_{i}\in\mathbb{R}^{n}$.
\cite{haa_sak_tam__ana} defined the proposal density of AM for the next 
candidate state 
$\theta^{\star}$ to be normal
%\begin{equation}
%\label{am_proposal}
%a_{\theta_k}(\theta^{\star}) :=
%$
%\mathcal{N}(\theta^{\star}|
%\theta_{k}, \beta S(\theta_{0:k})+\lambda\beta I)
%$
%\end{equation}
with mean equal to the current point $\theta_{k}$ and covariance
$\beta S(\theta_{0:k})+\lambda\beta I$ based on the empirical covariance matrix
\begin{equation}
\label{am_empirical_cov}
S(\theta_{0:k})=
\frac{1}{k}\left(
\sum_{i=0}^{k}\theta_{i}\theta_{i}^{T}-
(k+1)\bar{\theta}_{k}\bar{\theta}_{k}^{T}
\right)
\end{equation}
of the whole history $\theta_{0:k}$.
The constant $\lambda$ is set to a small positive value to constrain the 
empirical covariance within 
$c_1 I \le S(\theta_{0:k}) \le c_2 I$ for some constants $c_1, c_2 > 0$, 
thereby ensuring convergence for 
target densities of bounded support. The tuning parameter $\beta$, which may 
only depend on 
dimension $n$, allows to scale the covariance of the proposal density.

It follows from \eqref{am_empirical_cov} that the empirical covariance at the 
$k$-th AM iteration calculates recursively as
\begin{equation}
\label{am_empirical_cov_rec}
kS(\theta_{0:k})
= \ (k-1)S(\theta_{0:k-1}) +\theta_{k} \theta_{k}^{T}
-(k+1)\bar{\theta}_{k} \bar{\theta}_{k}^{T}+
k\bar{\theta}_{k-1} \bar{\theta}_{k-1}^{T}.
\end{equation}
The sample mean in \eqref{am_empirical_cov_rec} is also calculable recursively 
according to
\begin{equation}
\label{am_mean_rec}
k\bar{\theta}_k=(k-1)\bar{\theta}_{k-1}+\theta_k.
\end{equation}
The recursive equations \eqref{am_empirical_cov_rec} and \eqref{am_mean_rec} 
make the empirical covariance and sample mean 
of the chain computationally tractable for an arbitrarily large chain length.

\subsection{Adaptive Metropolis with mixture proposal}

\cite{rob_ros__exa} initiated AM with proposal density at iteration $k$ given by
\begin{equation}
\label{am_rob_proposal}
a_{\theta_{0:k}}(\theta^{\star})=\ (1-\lambda)
\mathcal{N}(\theta^{\star}|
\theta_{k}, \beta S(\theta_{0:k}))
+
\lambda
\mathcal{N}(\theta^{\star}|
\theta_{k}, \gamma I).
\end{equation}
The acceptance probability for AM in \cite{rob_ros__exa} is
\begin{equation}
\label{r_a}
r_{a}(\theta_k,\theta^{\star}) :=
\min\left\{\frac{
	p(\theta^*)a_{\theta_{0:k}}(\theta_k)
}{
	p(\theta_k)a_{\theta_{0:k}}(\theta^*)
},1\right\}
\end{equation}
if $p(\theta_k)a_{\theta_{0:k}}(\theta^*)>0$, and 
$r_{a}(\theta_k,\theta^{\star}):=1$ otherwise.

The proposal kernel $A(\theta_{0:k},\cdot)$ corresponding to the mixture 
density $a_{\theta_{0:k}}$ of equation
\eqref{am_rob_proposal} is defined by setting
$E=\mathbb{R}^n$, $w=0$, $z=k$, $q_{\theta_{w:z}}=a_{\theta_{0:k}}$
and $\nu$ to be the Lebesgue measure in equation \eqref{Q}.

The first component of mixture $a_{\theta_{0:k}}$ is updated 
adaptively using the whole chain history $\theta_{0:k}$ in the calculation of 
the empirical covariance matrix 
$S(\theta_{0:k})$ and the second component is introduced to stabilize the 
algorithm.
A small positive constant $\lambda$ in \eqref{am_rob_proposal} ensures 
convergence for a large family of target 
densities of 
unbounded support, including those that are log-concave outside some arbitrary 
bounded region.
Two tuning parameters appear in \eqref{am_rob_proposal}, namely $\beta$ and 
$\gamma$, which may only depend on dimension
$n$. Each of these parameters allow to scale the covariance of the respective 
mixture component.

\section{Complexity bounds}

This section provides upper bounds for the computational complexity of 
geometric Langevin Monte Carlo. It concludes with a
short overview of the computational cost of adaptive Metropolis algorithms.

\subsection{Complexity bounds for differentiation}

Calculations associated with the proposal and target densities determine the 
computational cost of Langevin 
Monte Carlo methods. In brief, the main computational requirements include 
sampling from and evaluating the proposal, 
as well as evaluating the target and its derivatives.

According to \eqref{mala_covariance}, the proposal covariance $\epsilon^2 
M^{-1}$ of MALA is constant, therefore it is 
derivative-free. On the other hand, the proposal covariance $\epsilon^2 
M^{-1}(\theta_k)$ of SMMALA and MMALA 
introduced in \eqref{mmala_covariance} require to compute the negative Hessian 
of the log-target as the 
position-specific metric $M(\theta_k)$. As seen from \eqref{mala_location}, 
\eqref{smmala_location} and 
\eqref{mmala_location}, the proposal location of MALA, SMMALA and MMALA entail 
the gradient, Hessian and Hessian 
derivatives of the log-target. Hence, fully specifying the proposal density of 
MALA, SMMALA and MMALA 
requires up to first, second and third order derivatives of the log-target.

%Assume an $n$-dimensional log-target with complexity $\mathcal{O}(f_{n})$, with 
%the notation indicating 
%dependence of $f$ on $n$. Considering the 
%highest order of log-target differentiation associated with each sampler, the 
%incurring costs for 
%target-related evalutions in MALA, SMMALA and MMALA grow as 
%$\mathcal{O}(f_{n}n)$, 
%$\mathcal{O}(f_{n}n^2)$ and $\mathcal{O}(f_n n^3)$, respectively.

Assume an $n$-dimensional log-target 
$f:=\log{p}$
with complexity $\mathcal{O}(f)$.
Considering the 
highest order of log-target differentiation associated with each sampler, the 
incurring costs for 
target-related evaluations in MALA, SMMALA and MMALA grow as 
$\mathcal{O}(fn)$, 
$\mathcal{O}(fn^2)$ and $\mathcal{O}(f n^3)$, respectively.

\subsection{Complexity bounds for linear algebra}

Having computed the log-target and its derivatives, the Langevin Monte Carlo 
normal proposal \eqref{langevin_proposal} 
is available to sample from and evaluate. The major computational cost of 
evaluating and sampling from the normal 
proposal \eqref{langevin_proposal} is related to linear algebra calculations, 
namely to the inversion and Cholesky 
decomposition of the proposal covariance $\epsilon^2 M^{-1}(\theta_k)$.

Using the Cholesky approach,
a candidate state $\theta^{\star}$ can be sampled from the normal proposal 
\eqref{langevin_proposal} of a Langevin Monte Carlo method with mean 
$\mu(\theta_k, M, \epsilon)$ 
and covariance $\epsilon^2 M^{-1}(\theta_k)$ as
% in the Cholesky approach by letting
\begin{equation*}
\theta^{\star} =
\mu(\theta_k, M, \epsilon)+
\epsilon \left(M^{-0.5}(\theta_k)\right)^{'}\tau,
\end{equation*}
where $M^{-0.5}(\theta_k)$ denotes the Cholesky factorization
\begin{equation*}
\left(M^{-0.5}(\theta_k)\right)^{'}M^{-0.5}(\theta_k)=
M^{-1}(\theta_k)
\end{equation*}
and $\tau\sim\mathcal{N}(0, I)$, see \cite{chi_gre__und}.
So, sampling from the proposal has a complexity of
% $\mathcal{O}(2n^3)$,
$\mathcal{O}(n^3)$
since 
it requires the inversion of 
metric $M(\theta_k)$ and the Cholesky decomposition of $M^{-1}(\theta_k)$, each 
of which are
$\mathcal{O}(n^3)$ operations.

The acceptance probability \eqref{langevin_acceptance} of SMMALA and MMALA 
requires 
to evaluate the normal proposal \eqref{langevin_proposal} at $\theta_k$. As it 
has become apparent,
a proposal density evaluation 
has a complexity of $\mathcal{O}(n^3)$ due to the inversion of $M(\theta_k)$ 
needed by the proposal covariance 
$\epsilon^2 M^{-1}(\theta_k)$.

To be precise, the normal proposal density \eqref{langevin_proposal} must be 
evaluated twice, once at $\theta_k$ 
and once at $\theta^{\star}$, due to its appearance both in the numerator and 
denominator of the acceptance
ratio \eqref{langevin_acceptance}. This implies twice the number of log-target 
differentiations and matrix inversions to
compute $M(\theta_k)$, $M(\theta^{\star})$ and their inverses. Nevertheless, 
the scaling factor of 
two can be omitted in big-$\mathcal{O}$ bounds since the numerics associated 
with the state $\theta_k$ 
are known from iteration $k-1$, with the exception of $M^{-0.5}(\theta_k)$.

So, SMMALA and MMALA require the Cholesky factorization 
$M^{-0.5}(\theta_k)$ to sample
$\theta^{\star}$ from the normal proposal $g_{\theta_k}$ and the matrix 
inverse  
$M^{-1}(\theta^{\star})$ to evaluate the normal proposal $g_{\theta^{\star}}$ at
$\theta$.
Consequently, the cost of linear algebra computations associated with the 
normal proposal density 
\eqref{langevin_proposal} is of order
% $\mathcal{O}(2n^3)$
$\mathcal{O}(n^3)$
for each of these two 
samplers.

The Coppersmith-Winograd algorithm has been
optimized to perform matrix multiplication and therefore matrix inversion in 
$\mathcal{O}(n^{2.373})$ time,
see \cite{dav_sto__imp}, \cite{wil__brea} and \cite{leg__pow}. Hence, 
optimized implementations of SMMALA and MMALA can evaluate and sample from 
their proposal densities in
time bounded by $n^3+n^{2.373}$,
thus reducing computational cost for smaller $n$.

% $\mathcal{O}(n^3+n^{2.373})$
% time given the log-target and its derivatives.

MALA, as opposed to SMMALA and MMALA, relies on a constant preconditioning 
matrix $M$, therefore $M^{-1}$ and 
$M^{-0.5}$ are evaluated once and cached at the beginning of the simulation 
avoiding the
% $\mathcal{O}(2n^3)$
$\mathcal{O}(n^3)$
penalty.
Since $M$, $M^{-1}$ and $M^{-0.5}$ are 
cached upon initializing MALA,
the complexity of sampling and evaluating the normal proposal density of MALA 
is capped by
the quadratic form
$
(\theta^{\star}-\theta_k)'
M^{-1}
(\theta^{\star}-\theta_k)
$
at $\mathcal{O}(n^2)$.
If $M$ is set to be the identity matrix, then the quadratic form
$
(\theta^{\star}-\theta_k)'
M^{-1}
(\theta^{\star}-\theta_k)
$
simplifies to the inner product
$
\langle\theta^{\star}-\theta_k,
\theta^{\star}-\theta_k\rangle
$ and the complexity of linear algebra calculations associated with the MALA 
proposal density further reduces to order 
$\mathcal{O}(n)$.

\subsection{Differentiation versus linear algebra costs}
\label{diff_vs_lin_algebra}

Adding up the differentiation and linear algebra costs yields the order of 
complexity for Langevin Monte 
Carlo. Hence, it follows that MALA, SMMALA and MMALA run in 
%$\mathcal{O}(fn+n^2)$, 
%$\mathcal{O}(fn^2+n^3+n^{2.373})$ and 
%$\mathcal{O}(fn^3+n^3+n^{2.373})$ time, respectively.
$\mathcal{O}(\max{\{fn,n^2\}})$, 
$\mathcal{O}(\max{\{fn^2,n^3\}})$ and 
$\mathcal{O}(\max{\{fn^3,n^3\}})$ time, respectively.
The terms $fn$, $fn^2$ and $fn^3$
indicate the cost of differentiating log-target $f$,
while $n^2$ and $n^3$ indicate linear algebra costs.

As an example of a computationally cheap model,
consider an isotropic normal log-target $f$,
which has complexity $\mathcal{O}(f)=\mathcal{O}(n)$.
In this case,
the differentiation and linear algebra costs for MALA and SMMALA are comparable,
and differentiation is more costly than linear algebra for MMALA.

On the other hand, computationally expensive models yield
$\mathcal{O}(f)>>\mathcal{O}(n)$. For such
models, the cost of computations implicating the log-target is much higher than 
the cost of proposal-related calculations. 
In other words, if the log-target is of high complexity, then derivative 
calculations supersede linear algebra calculations, 
and this is why the computational cost of manifold MCMC algorithms tends to be 
reported as a function of the order of 
derivatives appearing in the algorithm. For instance, the complexity of SMMALA, 
which scales as 
% $\mathcal{O}(fn^2+n^3+n^{2.373})$
$\mathcal{O}(\max{\{fn^2,n^3\}})$, can be simply written as 
$\mathcal{O}(fn^2)$ for a computationally intensive model.

An example of a computationally expensive model is a system of non-linear 
ordinary differential equations (ODEs), 
where each log-target calculation requires solving the ODE system numerically. 
It is then expected that the log-target and 
its derivative evaluations will dominate the cost of Langevin Monte Carlo 
simulations.
% according to the discussed complexity bounds.

%\textcolor{blue}{A univariate normal distribution is an example
%of a simple target scaling as $\mathcal{O}(1)$,
%while a log-target $f$ based on the likelihood function
%of the states of a system of differential equations given
%the parameters of the system is an example of an expensive target
%scaling as $\mathcal{O}(f)$.}

% A detailed account of the complexity of SMMALA has been given to highlight 
%the main sources of computational load
% involved in geometric updates. In what follows, scaling factors and constants 
%will be suppressed in big-$\mathcal{O}$ 
% notation.

\subsection{Complexity bounds for adaptive Metropolis}

From this point forward, the term adaptive Metropolis (AM) will refer to the AM 
algorithm of \cite{rob_ros__exa} with 
mixture proposal \eqref{am_rob_proposal}, as interest is in targets of 
unbounded support. AM does not evaluate any 
target-related derivatives. In lieu of differentiation costs, the 
target-specific complexity of AM is of order 
$\mathcal{O}(f)$.

The components of mixture density \eqref{am_rob_proposal} are centered at the 
current state, and the empirical covariance 
of the adaptive component is computed recursively. Thus, 
fully specifying the AM proposal density is computationally trivial given the 
chain history.

Sampling from and evaluating the fully specified normal mixture 
\eqref{am_rob_proposal} of AM incurs the typical linear 
algebra computational costs encountered in Langevin Monte Carlo, namely a 
Cholesky decomposition and an inversion of the 
empirical covariance matrix $S(\theta_{0:k})$. So, linear algebra manipulations 
of the AM proposal amount to a 
complexity of order
% $\mathcal{O}(2n^3)$.
$\mathcal{O}(n^3)$.

The recursive formula \eqref{am_empirical_cov_rec} allows to replace the 
Cholesky factorization of 
$S(\theta_{0:k})$ by two rank one updates and one rank one downdate, thus 
reducing the Cholesky runtime bound 
of AM from $\mathcal{O}(n^3)$ to
%$\mathcal{O}(3n^2)$.
$\mathcal{O}(n^2)$
\cite{gill_gol_wal__met} 
and \cite{see__low} 
elaborate on low rank updates for Cholesky decomposition.

In total, the computational cost of AM
is
% sums up to
% $\mathcal{O}(f+2n^3)$.
$\mathcal{O}(\max{\{f,n^3\}})$.
It reduces to
%$\mathcal{O}(f+n^{2.373}+3n^2)$
$\mathcal{O}(\max\{f, n^{2.373}\})$
if optimized algorithms are chosen to 
invert 
$S(\theta_{0:k})$ and if low rank updates are used for factorizing 
$S(\theta_{0:k})$.

For an isotropic normal log-target $f$, AM has a complexity of 
$\mathcal{O}(n^{2.373})$, so it is more 
costly than MALA and cheaper than SMMALA and MMALA. For expensive targets with 
complexity 
$\mathcal{O}(f)>>\mathcal{O}(n)$, AM runs in $\mathcal{O}(f)$ time, so 
it is cheaper
than MALA, SMMALA and MMALA.

\subsection{Summary of complexity bounds}

Table \ref{complexity_summary} shows the general complexity bounds
$\mathcal{O}(f)$ per step
of geometric Langevin Monte Carlo and of adaptive Metropolis
for any log-target $f$.
Moreover, the last two columns of table \ref{complexity_summary} 
show the complexity bounds for
relatively cheap targets of linear complexity $\mathcal{O}(f)=\mathcal{O}(n)$
and for expensive targets of complexity $\mathcal{O}(f)>>\mathcal{O}(n)$.

%\begin{centre}
\begin{table}[tp]
	\centering
	% \captionsetup{justification=centering}
	{\tabulinesep=1.2mm
	\begin{tabu}{l|l|l|l}%{@{} l|l|l|l @{}}
		% \noalign{\vskip 0.5mm}\hline\noalign{\vskip 0.5mm}
		\hline
		\multicolumn{1}{c|}{\multirow{2}{*}{Method}} &
		\multicolumn{1}{c|}{\multirow{2}{*}{General $\mathcal{O}(f)$}} & 
		\multicolumn{2}{c}{Special cases of $\mathcal{O}(f)$} \\ \cline{3-4}
		& &
		%\multicolumn{1}{|c}{
		% } &
		$\mathcal{O}(f)=\mathcal{O}(n)$ &
		\multicolumn{1}{c}{$\mathcal{O}(f)>>\mathcal{O}(n)$} \\
		%\noalign{\vskip 0.5mm}\hline\noalign{\vskip 0.5mm}
		%\noalign{\vskip 0.5mm}
		\hline
		MALA &
		$\mathcal{O}(\max{\{fn,n^2\}})$ &
		$\mathcal{O}(n^2)$ & $\mathcal{O}(fn)$ \\
		%\noalign{\vskip 0.5mm}\hline\noalign{\vskip 0.5mm}
		\hline
		SMMALA &
		% $\mathcal{O}(n^3)$
		$\mathcal{O}(\max{\{fn^2,n^3\}})$ &
		$\mathcal{O}(n^3)$ & $\mathcal{O}(fn^2)$ \\
		% \noalign{\vskip 0.5mm}\hline\noalign{\vskip 0.5mm}
		\hline
		MMALA &
		$\mathcal{O}(\max{\{fn^3,n^3\}})$ &
		$\mathcal{O}(n^4)$ & $\mathcal{O}(fn^3)$ \\
		% \noalign{\vskip 0.5mm}\hline\noalign{\vskip 0.5mm}
		\hline
		AM &
		% $\mathcal{O}(n^{2.373})$ &
		$\mathcal{O}(\max\{f, n^{2.373}\})$ &
		$\mathcal{O}(n^{2.373})$ & $\mathcal{O}(f)$ \\
		\noalign{\vskip 0.5mm}\hline
	\end{tabu}
	}
	\caption{General complexity bounds per step of MALA, SMMALA, MMALA and AM samplers,
	and two special cases of a log-target $f$ with linear complexity $\mathcal{O}(f)=\mathcal{O}(n)$ and of expensive log-targets $f$ with complexity
	$\mathcal{O}(f)>>\mathcal{O}(n)$.
}
\label{complexity_summary}
\end{table}
%\begin{centre}

For relatively cheap targets of linear complexity,
MALA has lower order of complexity than AM, which in turn 
has lower order of complexity than SMMALA and 
MMALA.
For expensive targets,
MALA, SMMALA, MMALA and AM share the same order of complexity 
$\mathcal{O}(f)$, with respective scaling factors $n$, $n^2$, 
$n^3$ and $1$.
These scaling factors are negligible for very expensive targets,
but they affect the total computational cost for a range of targets of modest 
to high complexity.

\section{Geometric adaptive Monte Carlo}

Manifold Langevin Monte Carlo pays a higher computational price than adaptive 
Metropolis to achieve increased effective 
sample size via geometric exploitation of the target. To get the best of both 
worlds, the goal is to construct a Monte 
Carlo sampler that attains fast mixing per step but with less cost per step.
Along these lines, the present paper introduces GAMC,
a hybrid sampling method that 
switches between expensive geometric 
Langevin Monte Carlo and cheap adaptive Metropolis updates.

\subsection{Sampling in random environment}

GAMC is defined as a discrete-time stochastic process 
$\{\theta_k\}$ in IID random environment.
The environment is a sequence $\{B_k\}$ of independent random variables 
admitting a Bernoulli distribution
with probability $s_k:=P(B_k=1)$.

Let $\tau_k$ be the last time before iteration $k$ that the geometric kernel 
was used, defined as the stopping time
\begin{equation*}
\tau_k:=
\begin{cases}
\underset{0\le i <k}{\max}\{i: B_i=1\} & \mbox{if such $i$ exists}, \\
\;\; 0 & \mbox{otherwise}.
\end{cases}
\end{equation*}
The sequence $\{\tau_k\}$ of stopping times induces a sequence of random 
proposal kernels
\begin{equation*}
% \label{gamc_kernel}
Q_{k}(\theta_{\tau_k:k},\cdot) :=
\begin{cases}
A(\theta_{\tau_k:k},\cdot) & \mbox{if } B_k=0, \\
G(\theta_k,\cdot) & \mbox{if } B_k=1,
% (1-B_k) q_{\alpha}
\end{cases}
\end{equation*}
switching between
adaptive proposal kernels $A(\theta_{\tau_k:k},\cdot)$ and
geometric proposal kernel $G(\theta_k,\cdot)$.
GAMC provides a general Monte Carlo sampling scheme,
which is instantiated depending on the choice of kernels 
$A(\theta_{\tau_k:k},\cdot)$ and $G(\theta_k,\cdot)$.
% adaptive proposal kernels $A(\theta_{\tau_k:k},\cdot)$ induced by 
%\eqref{am_rob_proposal} and
% geometric proposal kernel $G(\theta_k,\cdot)$ induced by 
%\eqref{langevin_proposal}, \eqref{smmala_location}
% and \eqref{mmala_covariance}.

It is noted that the dimension $k-\tau_k+1$ of the first argument 
$\theta_{\tau_k:k}\in E^{k-\tau_k+1}$ in the definition of
random kernel $Q_k:E^{k-\tau_k+1}\times\mathcal{E}\rightarrow\mathbb{R}_{+}$ 
varies between iterations due to the random
stopping time $\tau_k$.

For every $\theta_{\tau_k:k}\in E^{k-\tau_k+1}$, \cite{kal__ran} ensures that
the Radon-Nikodym derivative $q_{\theta_{\tau_k:k}}$ of random measure 
$Q_{k}(\theta_{\tau_k:k},\cdot)$ exists almost surely.

Equation \eqref{Q} is linked to
the Radon-Nikodym derivative $q_{\theta_{\tau_k:k}}$ of random measure 
$Q_{k}(\theta_{\tau_k:k},\cdot)$
by setting $E=\mathbb{R}^n$, $w=\tau_k$, $z=k$, $q_{w:z}=q_{\tau_k:k}$ and 
$\nu$ to be the Lebesgue measure.

Using the proposal density $q_{\theta_{\tau_k:k}}$,
the Metropolis-Hastings acceptance probability at the $k$-th iteration of GAMC 
is set to
\begin{equation*}
r_{q}(\theta_k,\theta^{\star}) :=
\min\left\{\frac{
	p(\theta^*)q_{\theta_{\tau_k:k}}(\theta_k)
}{
	p(\theta_k)q_{\theta_{\tau_k:k}}(\theta^*)
},1\right\}
\end{equation*}
if $p(\theta_k)q_{\theta_{\tau_k:k}}(\theta^*)>0$, and 
$r_{q}(\theta_k,\theta^{\star}):=1$ otherwise.

The process $\{\theta_k\}$ can be constructed from kernels $\{Q_k\}$ by 
extending the Ionescu Tulcea
theorem (see \cite{nev__the}) to processes in random environment.

A framework for generating chains via random proposal kernels is discussed in
% section 2, page 459, in 
\cite{rob_ros__cou}.
Non-Markovian chains in random environment, such as the process $\{\theta_k\}$ 
constructed via $\{Q_k\}$,
have received less attention than adaptive Monte Carlo methods in the 
literature.

\subsection{Algorithmic formulation}

Algorithm \ref{mamala_algo} provides a pseudocode representation of the 
proposed GAMC sampler.
The sequence $\{s_k\}$ of probabilities is deterministic.
Section \ref{choice_of_schedule} provides a condition on $\{s_k\}$ that ensures 
convergence of the GAMC sampler.

\begin{algorithm}[t]
	\caption{GAMC}
	\label{mamala_algo}
	\begin{algorithmic}
		\For{$k = 0$ to $m-1$} \Comment{$m$: number of iterations}
		% \State $s_m = f(m)$\\
		
		\State Sample $B_k \sim \mbox{Bernoulli}(s_k)$\\
		
		\State $\tau_k =
		\begin{cases}
		\underset{0\le i <k}{\max}\{i: B_i=1\} & \mbox{if such $i$ exists}\\
		\;\; 0 & \mbox{otherwise}
		\end{cases}$\\
		
		\If {$B_k = 0$} \Comment{Use adaptive kernel}
		\State $Q_k(\theta_{\tau_k:k}, \cdot) = A(\theta_{\tau_k:k}, \cdot)$
		\ElsIf {$B_k=1$} \Comment{Use geometric kernel}
		\State $Q_k(\theta_{\tau_k:k}, \cdot) = G(\theta_k, \cdot)$
		\EndIf\\
		
		\State Sample $u\sim\mathcal{U}(0, 1)$ \Comment{Uniform density 
		$\mathcal{U}(0, 1)$}\\
		
		\State Sample 
		$\theta^{*}
		\sim Q_k(\theta_{\tau_k:k},\cdot)
		$\\
		
		\State $r_{q}(\theta_k,\theta^{\star}) =
		\min\left\{\frac{
			p(\theta^*)q_{\theta_{\tau_k:k}}(\theta_k)
		}{
			p(\theta_k)q_{\theta_{\tau_k:k}}(\theta^*)
		},1\right\}
		$\\
		
		\If
		{
			$u<r_{q}(\theta_k,\theta^{\star})$
		}
		\State $\theta_{k+1}=\theta^{\star}$
		\Else
		\State $\theta_{k+1}=\theta_k$		
		\EndIf
		
		\EndFor
	\end{algorithmic}
\end{algorithm}

At its $k$-th iteration, GAMC
uses either AM proposal kernel $A(\theta_{\tau_k:k}, \cdot)$ dependent on the 
past $k-\tau_k+1$
states $\theta_{\tau_k:k}$
as determined by the stopping time $\tau_k$
or LMC proposal kernel $G(\theta_k,\cdot)$ dependent only on the current state 
$\theta_k$.

Algorithm \ref{mamala_algo} demonstrates that the proposal covariance is based 
on the position-specific metric $M$ 
whenever possible and falls back to the empirical covariance $S$ otherwise. 
Thus, $M$ initializes $S$, 
and the latter is recursively updated via \eqref{am_empirical_cov_rec} until 
the next geometric update re-initializes the 
empirical covariance.

% \vspace{-0.15in}
\subsection{Convergence properties}

This section establishes the convergence properties of GAMC.
Recall that $s_k$ is the probability of picking the geometric kernel at the 
$k$-iteration of GAMC.

\begin{proposition}
	\label{prop_convergence}
	If $\sum_{k=0}^{\infty}s_k < \infty$,
	then the convergence properties of GAMC are solely determined by the 
	convergence
	properties of its AM counterpart.
	% a chain $\{\theta_k\}$ generated by GAMC
	% satisfies the weak law of large numbers. 
\end{proposition}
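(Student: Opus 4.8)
The plan is to reduce GAMC to its adaptive counterpart by showing that the geometric kernel is used only finitely often, and then to exploit the fact that convergence to stationarity is a tail property. First I would apply the first Borel--Cantelli lemma to the environment. Since $\{B_k\}$ are independent with $P(B_k=1)=s_k$ and $\sum_{k=0}^{\infty}s_k<\infty$, it follows that $P(B_k=1 \text{ infinitely often})=0$, so almost surely the random set $\{k: B_k=1\}$ is finite. Define $N:=\sup\{k\ge 0: B_k=1\}$ (setting $N=0$ when the set is empty); then $N<\infty$ almost surely.

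Second, I would argue that on the probability-one event $\{N<\infty\}$ the tail of the chain is a pure AM chain. For every $k>N$ one has $B_k=0$, so $Q_k(\theta_{\tau_k:k},\cdot)=A(\theta_{\tau_k:k},\cdot)$, and the stopping time freezes at $\tau_k=N$. Consequently the proposal reduces to $a_{\theta_{N:k}}$ and the Metropolis--Hastings ratio $r_q$ reduces to the AM acceptance ratio $r_a$, so the full transition kernel for $k>N$ coincides with that of AM initialized from the random state $\theta_N$ at time $N$. The only discrepancy with the AM counterpart is that the empirical covariance accumulates over the window $\theta_{N:k}$ rather than $\theta_{0:k}$; because $N$ is almost surely finite, discarding the finite initial segment does not alter the limit of $S$, so the diminishing adaptation and bounded convergence (containment) conditions of \cite{rob_ros__cou} that the AM counterpart satisfies are inherited by the tail of GAMC.

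Finally, I would conclude using that convergence to the target distribution and the attendant strong laws of large numbers are asymptotic properties, hence invariant under modification of the process on an almost surely finite initial segment. Conditioning on $\{N=n,\theta_n=x\}$, the tail of GAMC is an honest AM chain started from $x$ at time $n$; since the AM convergence results hold for an arbitrary starting state, the conditional law of $\theta_k$ converges to the target for every such $(n,x)$, and integrating over the almost surely finite $N$ and the state $\theta_N$ transfers the convergence to GAMC unconditionally. Running the same comparison in reverse shows that failure of AM to converge forces failure of GAMC, so the two share identical convergence behaviour.

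The main obstacle I anticipate is the rigorous bookkeeping around the random, environment-dependent time $N$: one must justify that the AM convergence theorems, normally stated for a chain with a fixed initial law, apply to the tail of GAMC started from the random state $\theta_N$ at the random time $N$, and that the harmless shift in the empirical-covariance window genuinely preserves diminishing adaptation and containment. I expect this to be handled cleanly by conditioning on the environment and on the pair $(N,\theta_N)$ and then integrating, using throughout that $N<\infty$ almost surely.
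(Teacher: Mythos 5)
Your proof takes essentially the same route as the paper: the first Borel--Cantelli lemma gives that the geometric kernel is selected only finitely many times almost surely, so the convergence behaviour of GAMC is inherited from its AM counterpart. Your version is in fact more detailed than the paper's (which stops at the Borel--Cantelli step), and the additional bookkeeping you supply --- conditioning on the last geometric time $N$ and the state $\theta_N$, and checking that the shifted covariance window preserves diminishing adaptation and containment --- correctly identifies and fills the gaps the paper leaves implicit.
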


\begin{proof}
	%The notation established in section \ref{background} will be employed in the 
	%case of $E=\mathbb{R}^n$
	%equipped with the Borel $\sigma$-algebra $\mathcal{E}=\sigma (\mathbb{R}^n)$.
	%Let $p\colon \mathbb{R}^n\rightarrow\mathbb{R}_{+}$ be a target 
	%density and $\pi\colon\sigma(\mathbb{R}^n)\rightarrow\mathbb{R}_{+}$ the 
	%associated target distribution
	%\begin{equation*}
	%\pi(B)=\int_{B}pd\nu,\quad B\in\sigma(\mathbb{R}^n),
	%\end{equation*}
	%where $\nu$ is the Lebesgue measure.
	
	%At the $m$-th iteration of GAMC, the Metropolis kernel is
	%\begin{equation}
	%\label{metropolis_kernel}
	%\begin{array}{l@{}l}
	%&{}H_m(\theta_{\tau_m:m}, B)
	%=\ \displaystyle\int_{B}Q_m(\theta_{\tau_m:m}, 
	%d\theta^*)r_{q}(\theta_m,\theta^*) \\
	%&{}\ + \mathbbm{1}_B(\theta_m)
	%\displaystyle\int_{\mathbb{R}^n}
	%Q_m(\theta_{\tau_m:m}, d\theta^*)\left(1-r_{q}(\theta_m,\theta^*)\right),
	%\end{array}
	%\end{equation}
	%where $\mathbbm{1}_B$ is the indicator function of 
	%$B\in\sigma(\mathbb{R}^n)$.
	
	%The AM algorithm of \cite{rob_ros__exa},
	%as defined by \eqref{am_rob_proposal} and \eqref{r_a}, satisfies the weak 
	%law of large numbers. %, see [X] and [X].
	Due to the Borel-Cantelli lemma, the assumption $\sum_{k=0}^{\infty}s_k < 
	\infty$ implies that
	the GAMC proposal kernel $Q_k$ is set to the geometric proposal kernel $G$ 
	only a finite number of times almost  
	surely. Hence,
	if the AM algorithm based on the adaptive proposal kernel $A$ of GAMC is 
	ergodic or satisfies the weak law of
	large numbers, then so does the corresponding GAMC sampler almost surely.
	%GAMC and its AM counterpart share the same convergence properties almost 
	%surely.
	%Hence, any chain $\{\theta_k\}$ generated by GAMC also satisfies
	%\begin{equation}
	%\lim_{m\to\infty}\frac{1}{m}\sum_{k=0}^{m}z(\theta_k)=\int_{\mathbb{R}^n}zd\pi
	%\end{equation}
	%in probability
	%for any bounded function $z\colon\mathbb{R}^n\rightarrow\mathbb{R}$.
\end{proof}

\begin{corollary}
	\label{cor_convergence}
	If $\sum_{k=0}^{\infty}s_k < \infty$ and the adaptive proposal kernel of GAMC 
	is specified via the mixture proposal density
	\eqref{am_rob_proposal}, then GAMC satisfies the weak law of large numbers.
\end{corollary}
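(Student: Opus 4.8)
The plan is to reduce the claim to the known ergodicity of the mixture adaptive Metropolis of \cite{rob_ros__exa} and then transport it to GAMC through Proposition~\ref{prop_convergence}. Since $\sum_{k=0}^{\infty}s_k<\infty$, the Proposition asserts that the convergence behaviour of GAMC is inherited from its AM counterpart, namely the sampler driven solely by the mixture kernel $A$ with proposal density \eqref{am_rob_proposal}. Concretely, the Borel--Cantelli argument underlying the Proposition shows that the geometric kernel is selected only finitely often almost surely, so there is an almost surely finite random time after which the GAMC chain evolves exactly as this AM chain. Because a finite initial segment leaves the Ces\`aro averages $\frac{1}{m}\sum_{k}g(\theta_k)$ of bounded functionals $g$ unchanged in the limit, it suffices to establish the weak law of large numbers for the mixture AM itself.

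For that, I would appeal to the general ergodicity theorem for adaptive Markov chain Monte Carlo in \cite{rob_ros__cou}, which yields the weak law of large numbers once two conditions are verified: diminishing adaptation and bounded convergence. Diminishing adaptation follows directly from the recursions \eqref{am_empirical_cov_rec} and \eqref{am_mean_rec}: the increment $S(\theta_{0:k})-S(\theta_{0:k-1})$ is of order $\mathcal{O}(1/k)$, so the total variation distance between successive proposal densities \eqref{am_rob_proposal} vanishes as $k\rightarrow\infty$. Bounded convergence is secured by the structure of \eqref{am_rob_proposal}: the fixed component $\lambda\,\mathcal{N}(\theta^{\star}\,|\,\theta_k,\gamma I)$ provides a non-adaptive random-walk floor, and the constraint $c_1 I\le S(\theta_{0:k})\le c_2 I$ enforced through $\lambda$ keeps the adaptive covariance bounded away from both singularity and infinity uniformly in the adaptation parameter. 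Since these are precisely the conditions checked in \cite{rob_ros__exa} for the mixture AM on targets of unbounded support, the most economical route is to cite that the sampler with proposal \eqref{am_rob_proposal} satisfies the weak law of large numbers, and then invoke Proposition~\ref{prop_convergence} to conclude the same for GAMC.

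The hard part is the bounded convergence condition, because the adaptation parameter ranges over the non-compact cone of covariance matrices; this is exactly the difficulty that the fixed mixture component and the two-sided bound on $S(\theta_{0:k})$ in \eqref{am_rob_proposal} were designed to resolve, and I would rely on \cite{rob_ros__exa} for its verification rather than repeat it. A secondary point to treat with care is that the adaptive phases of GAMC use the windowed history $\theta_{\tau_k:k}$ rather than the full history $\theta_{0:k}$ of \eqref{am_empirical_cov}; however, once the last geometric step has occurred the stopping time $\tau_k$ is frozen, so the window has a fixed origin and the chain reduces to a time-shifted instance of the mixture AM, whence the weak law of large numbers transfers without modification.
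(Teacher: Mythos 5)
Your proposal is correct and follows essentially the same route as the paper: cite the weak law of large numbers for the mixture adaptive Metropolis of \cite{rob_ros__exa} and transfer it to GAMC via Proposition~\ref{prop_convergence}. The extra observations you supply --- the Ces\`aro-average argument for discarding the almost surely finite geometric prefix, the sketch of diminishing adaptation and bounded convergence, and the remark that the windowed history $\theta_{\tau_k:k}$ freezes into a time-shifted AM after the last geometric step --- are all sound and in fact make explicit points the paper's proof leaves implicit.
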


\begin{proof}
	Using the notation of section \ref{background}, set $E=\mathbb{R}^n$,
	equipped with the Borel $\sigma$-algebra $\mathcal{E}=\sigma (\mathbb{R}^n)$.
	Let $p\colon \mathbb{R}^n\rightarrow\mathbb{R}_{+}$ be a possibly 
	unnormalized target 
	density and $\pi\colon\sigma(\mathbb{R}^n)\rightarrow\mathbb{R}_{+}$ the 
	associated target distribution
	\begin{equation*}
	\pi(B)=\int_{B}pd\nu,\quad B\in\sigma(\mathbb{R}^n),
	\end{equation*}
	where $\nu$ is the Lebesgue measure.
	
	The AM algorithm of \cite{rob_ros__exa},
	as defined by \eqref{am_rob_proposal} and \eqref{r_a}, satisfies the weak law 
	of large numbers.
	Hence, according to proposition \ref{prop_convergence}, any chain 
	$\{\theta_k\}$ generated by GAMC also satisfies
	\begin{equation*}
	\lim_{m\to\infty}\frac{1}{m}\sum_{k=0}^{m}h(\theta_k)=\int_{\mathbb{R}^n}hd\pi
	\end{equation*}
	in probability
	for any bounded function $h\colon\mathbb{R}^n\rightarrow\mathbb{R}$.
\end{proof}

For AM kernels satisfying a set of different conditions
(see \cite{sak_vih__erg,bai_rob_ros__ont,haa_sak_tam__ana}),
AM and consequently GAMC are ergodic.

% It was shown in \cite{rob_ros__cou} that the AM algorithm converges under the 
%so-called diminishing adaptation and 
% bounded 
% convergence conditions. As noted in \cite{rob_ros__exa}, the AM kernel 
%\eqref{am_rob_proposal} satisfies the 
% diminishing 
% adaptation condition. The bounded convergence condition requires the 
%convergence time of AM to remain bounded in 
% probability 
% and is a lot harder to establish. Under some additional assumptions on the 
%target distribution, the ergodicity of AM 
% in the case of unbounded domains was obtained in 
%\cite{rob_ros__exa,sak_vih__erg,bai_rob_ros__ont}.

\subsection{Choice of schedule for geometric steps}
\label{choice_of_schedule}

A design decision to make is how to set the sequence of probabilities 
$\left\{s_k\right\}$ of choosing geometric over
adaptive steps. 
The choice of $\left\{s_k\right\}$ affects the convergence properties and the 
computational complexity of GAMC.

One possibility is to make the frequency of geometric steps more pronounced in 
early transient phases of the chain and 
let the computationally cheaper adaptive kernel take over asymptotically in 
late stationary phases.
This possibility is confined by the requirement of convergence, which in turn 
can be fulfilled by the condition
$\sum_{k=0}^{\infty}s_k < \infty$ of proposition \ref{prop_convergence}.

An example of a sequence of probabilities $\left\{s_k\right\}$ that conform to 
these practical guidelines and convergence
requirements is
\begin{equation}
\label{exp_schedule}
s_k=e^{-rk},
\end{equation}
where $r$ is a positive-valued tuning parameter.
Larger values of $r$ in \eqref{exp_schedule} yield 
faster reduction in the probability of using the geometric kernel.

The probabilities $\left\{s_k\right\}$ of GAMC play an analogous role as 
temperature in simulated annealing.
Thereby, $\left\{s_k\right\}$ can be thought as a schedule for regulating the 
choice of proposal kernel.
There is a rich literature on cooling schedules for simulated annealing
\cite{kir_gel_vec__opt,haj__coo,loc__sim,nou_and__aco,mar_sie__aco},
some of which can be employed as $\left\{s_k\right\}$.

In this paper, GAMC is equipped with the exponential schedule 
\eqref{exp_schedule}. Under schedule 
\eqref{exp_schedule}, 
GAMC and AM share similar convergence properties and complexity bounds 
asymptotically.
Yet GAMC has faster mixing per step 
than AM due to exploitation of local geometric information in early phases of 
the chain. The tuning parameter $r$ in
\eqref{exp_schedule} regulates the frequency of geometric steps and therefore 
the ratio of mixing per step and 
computational cost per step.

\subsection{Expected complexity}

The concept of complexity carries three meanings in the context of 
MCMC. Firstly, MCMC samplers need to be tuned so 
as to achieve a balance between proposing large enough jumps and ensuring that 
a reasonable proportion of jumps are accepted.
By way of illustration, MALA attains its optimal acceptance rate of $57.4\%$ as 
$n\rightarrow\infty$ by setting its
drift step $\epsilon$ to be in the vicinity of $n^{-1/3}$. Because of this, it 
is said that the algorithmic
efficiency of MALA scales $\mathcal{O}(n^{1/3})$ as the number $n$ of 
parameters increases.

Secondly, the quality of MCMC methods depends on their rate of mixing per step. 
Along these lines, the effective sample size 
(ESS) is used for quantifying the mixing properties of an MCMC method. The ESS 
of a chain of length $m$ is interpreted as 
the number of samples in the chain bearing the same amount of variance as the 
one found in $m$ independent samples.

A third criterion for assessing MCMC algorithms is their computational cost per 
step. This criterion corresponds to the 
ordinary concept of algorithmic complexity, as it entails a count of numerical 
operations performed by an MCMC algorithm. To 
give an example, the computational complexity of MALA with an identity 
preconditioning matrix for an isotropic normal target is of order 
$\mathcal{O}(n^2)$, as explained in section \ref{diff_vs_lin_algebra}.

Of these three indicators of complexity, ESS and computational runtime are the 
ones typically used for understanding the 
applicability of MCMC methods. To get a single-number summary, the ratio of ESS 
over runtime is usually employed.

The present section states the expected complexity per step of GAMC given the 
selected length $m$ of simulation, while 
section \ref{sim_study} provides an empirical assessment of GAMC via its ESS 
and CPU runtime.

\begin{proposition}
	Denote by $c_{g}$ and $c_{a}$ the computational complexities
	per geometric and adaptive Monte Carlo step of GAMC, respectively.
	%Let GAMC conduct a Bernoulli trial $B_m$ at iteration $m$ with probabilities 
	%$s_m$ and $1-s_m$ of choosing between its
	%constituent LMC and AM proposal kernels. Denote by $c_{\ell}$ and 
	%$c_{\alpha}$ the computational complexities 
	%corresponding to the LMC and AM proposal mechanisms.
	The expected complexity per step of GAMC for generating an $m$-length chain is
	\begin{equation}
	\label{eq:c}
	\left(\frac{1}{m}\sum_{k=0}^{m-1}s_k\right) 
	c_{g}+\left(1-\frac{1}{m}\sum_{k=0}^{m-1}s_k\right) c_{a}.
	\end{equation}
\end{proposition}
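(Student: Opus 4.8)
The plan is to treat the cost incurred at each iteration as a random variable governed by the environment variable $B_k$ and then average by linearity of expectation. First I would observe that, by the construction of the random kernel $Q_k$ in Algorithm \ref{mamala_algo}, the work performed at iteration $k$ is dictated entirely by $B_k$: when $B_k=1$ the sampler runs a geometric step costing $\mathcal{O}(c_g)$, and when $B_k=0$ it runs an adaptive step costing $\mathcal{O}(c_a)$. Writing $C_k$ for the random cost of iteration $k$, and using that $B_k\in\{0,1\}$ selects exactly one branch, this gives the pointwise bound $C_k=\mathcal{O}\bigl(B_k c_g + (1-B_k) c_a\bigr)$.

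Next I would take expectations. Because the environment $\{B_k\}$ is a sequence of independent Bernoulli variables with $\mathbb{E}[B_k]=P(B_k=1)=s_k$, linearity of expectation yields $\mathbb{E}[C_k]=\mathcal{O}\bigl(s_k c_g + (1-s_k) c_a\bigr)$. Summing over the $m$ iterations and dividing by $m$ to pass to a per-step figure produces
\begin{equation*}
\frac{1}{m}\sum_{k=0}^{m-1}\mathbb{E}[C_k]
=\mathcal{O}\!\left(\frac{1}{m}\sum_{k=0}^{m-1}\bigl(s_k c_g + (1-s_k) c_a\bigr)\right),
\end{equation*}
and a rearrangement that factors $c_g$ and $c_a$ out of the sum and collects the coefficient of $c_a$ as $1-\frac{1}{m}\sum_{k=0}^{m-1}s_k$ delivers the claimed expression \eqref{eq:c}.

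The only point requiring care — and the closest thing to an obstacle — is justifying that $c_g$ and $c_a$ are genuine per-step costs that do not silently depend on the random history length $k-\tau_k+1$ or on the state visited. For the geometric kernel this is immediate, since $G(\theta_k,\cdot)$ depends only on the current state. For the adaptive kernel I would appeal to the complexity analysis of Section \ref{diff_vs_lin_algebra}, specifically the recursive updates \eqref{am_empirical_cov_rec}--\eqref{am_mean_rec} together with the low-rank Cholesky updates, which ensure that one adaptive step costs $\mathcal{O}(c_a)$ irrespective of how many past states enter $\theta_{\tau_k:k}$; the re-initialization of $S$ by $M$ following a geometric step is likewise absorbed into the same bound. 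Once this is in place, the independence of the $\{B_k\}$ makes the expectation step routine and the result follows.
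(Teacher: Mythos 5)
Your argument is correct and is essentially the paper's own proof: the paper simply computes $E\bigl(\sum_{k=0}^{m-1}B_k\bigr)=\sum_{k=0}^{m-1}s_k$ by linearity of expectation and concludes directly, which is the same Bernoulli-averaging step you carry out per iteration before summing and dividing by $m$. Your additional remark that the recursive updates keep the adaptive per-step cost independent of the history length is a sensible point of care that the paper leaves implicit.
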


\begin{proof}
	The expected number of geometric steps equals
	\begin{equation*}
	E\left(\sum_{k=0}^{m-1}B_k\right)=\sum_{k=0}^{m-1}E(B_k)=\sum_{k=0}^{m-1}s_k,
	\end{equation*}
	whence the conclusion follows directly.
\end{proof}

\begin{corollary}
	If the exponential schedule \eqref{exp_schedule} is used for regulating the 
	choice of proposal kernel, then the 
	expected complexity per step of GAMC for generating an $m$-length chain 
	expresses as
	\begin{equation}
	\label{eq:c_exp}
	\dfrac{1-e^{-rm}}{m(1-e^{-r})}c_{g}+
	\left(1-\dfrac{1-e^{-rm}}{m(1-e^{-r})}\right)c_{a}.
	\end{equation}
\end{corollary}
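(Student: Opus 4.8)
The plan is to apply the preceding proposition directly, substituting the exponential schedule $s_k = e^{-rk}$ of \eqref{exp_schedule} into the general expected-complexity expression \eqref{eq:c} and then evaluating the resulting finite sum in closed form. Since the proposition already establishes that the expected complexity per step of GAMC equals $\mathcal{O}\left(\left(\frac{1}{m}\sum_{k=0}^{m-1}s_k\right) c_{g}+\left(1-\frac{1}{m}\sum_{k=0}^{m-1}s_k\right) c_{a}\right)$, the entire task reduces to computing the time-averaged geometric-step probability $\frac{1}{m}\sum_{k=0}^{m-1}s_k$ under this particular choice of schedule.

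First I would recognize $\sum_{k=0}^{m-1}e^{-rk}$ as a finite geometric series with common ratio $e^{-r}$. Because $r$ is a positive-valued tuning parameter, we have $e^{-r}\in(0,1)$; in particular $e^{-r}\neq 1$, so the standard geometric-sum identity applies and gives $\sum_{k=0}^{m-1}e^{-rk} = \frac{1-e^{-rm}}{1-e^{-r}}$. Dividing through by $m$ then yields the closed form $\frac{1}{m}\sum_{k=0}^{m-1}s_k = \frac{1-e^{-rm}}{m(1-e^{-r})}$.

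Finally I would substitute this expression back into \eqref{eq:c}, replacing the average that multiplies $c_{g}$ and, through its complement, the average subtracted in the coefficient of $c_{a}$. This immediately produces the claimed bound \eqref{eq:c_exp}. There is no real obstacle here: the positivity of $r$ is exactly what guarantees a nondegenerate denominator $1-e^{-r}$, so the closed form is well defined for every $m$, and the statement follows as a direct corollary obtained by specializing the schedule and summing a geometric progression.
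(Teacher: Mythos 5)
Your proposal is correct and matches the paper's proof exactly: both evaluate $\sum_{k=0}^{m-1}e^{-rk}$ as a geometric series equal to $\frac{1-e^{-rm}}{1-e^{-r}}$ and substitute into \eqref{eq:c}. The additional remark that $r>0$ guarantees $e^{-r}\neq 1$ is a small but welcome justification that the paper leaves implicit.
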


\begin{proof}
	Under the exponential schedule \eqref{exp_schedule}, observe that
	\begin{equation*}
	\sum_{k=0}^{m-1}s_k=\sum_{k=0}^{m-1}e^{-rk}=\frac{1-e^{-rm}}{1-e^{-r}},
	\end{equation*}
	whence \eqref{eq:c} yields \eqref{eq:c_exp}.
\end{proof}

\begin{corollary}
	\label{corol_asym_complexity}
	As the number $m$ of iterations gets large ($m\rightarrow\infty$), the 
	expected complexity per step of GAMC under the 
	exponential schedule \eqref{exp_schedule} reduces to the complexity of its AM 
	counterpart.
\end{corollary}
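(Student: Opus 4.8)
The plan is to take the limit $m\rightarrow\infty$ of the weight multiplying $c_g$ in \eqref{eq:c_exp} and show that it vanishes, so that the whole expression collapses onto $\mathcal{O}(c_a)$. First I would isolate
\[
w_m := \frac{1-e^{-rm}}{m(1-e^{-r})},
\]
the coefficient of $c_g$ appearing in \eqref{eq:c_exp}, and observe that $1-w_m$ is the coefficient of $c_a$. The entire content of the corollary is then the single claim $w_m\rightarrow 0$.

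Next I would control $w_m$ by bounding its numerator and denominator separately. Because the tuning parameter $r$ in \eqref{exp_schedule} is strictly positive, for every $m\ge 1$ the numerator satisfies $0<1-e^{-rm}<1$, while the factor $1-e^{-r}$ in the denominator is a fixed strictly positive constant, so the denominator $m(1-e^{-r})$ diverges linearly in $m$. A squeeze of the form $0<w_m<1/(m(1-e^{-r}))$ then yields $w_m\rightarrow 0$, and consequently $1-w_m\rightarrow 1$, as $m\rightarrow\infty$.

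Finally, substituting these two limits back into \eqref{eq:c_exp}, the term carrying $c_g$ becomes asymptotically negligible while the term carrying $c_a$ tends to $c_a$, so the expected complexity per step reduces to $\mathcal{O}(c_a)$, which is exactly the per-step complexity of the AM counterpart of GAMC established in the summary of complexity bounds. This delivers the stated conclusion.

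There is no genuine analytic obstacle in this argument; it is a direct limit computation. The one point worth stating explicitly is the role of the hypothesis $r>0$ inherited from the exponential schedule \eqref{exp_schedule}: it is precisely what forces $e^{-rm}\rightarrow 0$ and simultaneously keeps $1-e^{-r}$ bounded away from zero, so that the denominator truly diverges and the geometric weight $w_m$ vanishes. Were $r=0$ (constant schedule) the geometric weight would not decay and the reduction to the AM complexity would fail, so this is the hypothesis on which the whole statement rests.
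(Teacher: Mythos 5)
Your argument is correct and is essentially identical to the paper's proof: both reduce the corollary to the single limit $\lim_{m\to\infty}(1-e^{-rm})/(m(1-e^{-r}))=0$ and then read off that \eqref{eq:c_exp} collapses to $\mathcal{O}(c_a)$. You merely spell out the squeeze bound and the role of $r>0$, which the paper leaves implicit.
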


\begin{proof}
	%It suffices to notice that
	Since
	\begin{equation*}
	%\label{lim_exp_coef}
	\lim_{m\rightarrow\infty}\frac{1-e^{-rm}}{m(1-e^{-r})}= 0,
	\end{equation*}
	the bound \eqref{eq:c_exp} diminishes asymptotically to 
	$\mathcal{O}\left(c_{a}\right)$.
\end{proof}

As an example, consider the GAMC sampler
with AM proposal kernels induced by \eqref{am_rob_proposal} and
SMMALA proposal kernel as induced by \eqref{langevin_proposal}, 
\eqref{smmala_location} and \eqref{mmala_covariance}.
For such a configuration of GAMC,
as seen from table \ref{complexity_summary},
expensive targets with complexity 
$\mathcal{O}\left(f\right)>>\mathcal{O}\left(n\right)$ are associated with 
complexities
$c_g=$ $\mathcal{O}\left(fn^2\right)$ and 
$c_a=\mathcal{O}\left(f\right)$ in \eqref{eq:c_exp}.
So, the expected complexity per step of GAMC for generating an $m$-length chain 
is
% As seen from Table \ref{complexity_summary},
% As an example, consider GAMC equipped with the SMMALA and mixture AM kernel 
%\eqref{am_rob_proposal}. In this case, 
%$c_{\ell}=\mathcal{O}\left(f_{n}n^2\right)$ and 
%$c_{\alpha}=\mathcal{O}\left(f_{n}\right)$ for 
%expensive targets with complexity 
%$\mathcal{O}\left(f_{n}\right)>>\mathcal{O}\left(n\right)$ set the complexity 
%per SMMALA
%and AM step of GAMC yields an expected complexity of
\begin{equation}
\label{eq:c_exp_ex}
\mathcal{O}\left(
\max{\left\{
\dfrac{1-e^{-rm}}{m(1-e^{-r})}fn^2,
\left(1-\dfrac{1-e^{-rm}}{m(1-e^{-r})}\right)f
\right\}}
\right)
\end{equation}
for expensive targets,
which is bounded below by the AM complexity of $\mathcal{O}\left(f\right)$ 
and above by the SMMALA 
complexity of $\mathcal{O}\left(fn^2\right)$. For instance, setting 
$m=10^5$ and $r=10/m=10^{-4}$ 
in \eqref{eq:c_exp_ex} yields an expected complexity per step of GAMC equal to 
$\mathcal{O}\left(\max{\{0.1fn^2,0.9f\}}\right)$.
For increasing number $m$ of iterations, the expected complexity
per step of GAMC in \eqref{eq:c_exp_ex} tends to the lower bound 
$\mathcal{O}\left(f\right)$ of AM complexity
(see corollary \ref{corol_asym_complexity}).

More 
generally, the convergence properties and computational complexity of GAMC are 
determined asymptotically by the AM proposal 
kernel used in GAMC.
%, as seen from Proposition \ref{prop_convergence} and Corollary 
%\ref{corol_asym_complexity}.
Despite the shared asymptotic properties of GAMC and AM,
the SMMALA steps in early transient phases of GAMC provide an improvement in 
mixing over AM. For example, setting 
$m=10^5$ and $r=10^{-4}$ in \eqref{eq:c_exp_ex} produces an expected $10\%$ of 
SMMALA steps, which is a potentially 
sufficient perturbation in early stages of parameter space exploration so as to 
move to target modes of higher probability 
mass.

\subsection{Analytically intractable geometric steps}

In practice, challenges in the implementation of manifold MCMC algorithms might 
raise additional computational implications.
In particular, two notoriously recurring issues relate to the Cholesky 
decomposition of metric $M^{-1}$ 
and to the calculation of up to third order derivatives of $M$.

Various factors, such as finite-precision floating point arithmetic, can lead 
to an indefinite proposal
covariance matrix $\epsilon^2 M^{-1}$. This in turn breaks the Cholesky 
factorization of 
$\epsilon^2 M^{-1}$. Several research avenues have introduced alternative 
positive definite 
approximations of indefinite matrices 
\cite{hig__com01,hig__com02,hig_stra__and} and approximate Riemann manifold 
metric 
choices \cite{bet__age,hou__hes,kle__ada}, which offer proxies for an 
indefinite covariance matrix
$\epsilon^2 M^{-1}$.

Non-trivial models can render the analytic derivation of log-target derivatives 
impossible or impractical. Automatic
differentiation (AD), a computationally driven research activity that has 
evolved since the mid 1950's, helps compute 
derivatives in a numerically exact way. Indeed, \cite{gri__ona} has shown that 
AD is backward stable in the sense of 
\cite{wil__mod}. Thus, small perturbations of the original function due to 
machine precision still yield accurate 
derivatives calculated via AD.

There are different methods of automatic differentiation that mainly differ in 
the way they traverse the chain rule; reverse 
mode AD is better suited for functions $h:\mathbb{R}^n\rightarrow\mathbb{R}$, 
in contrast to forward mode AD that is more 
suitable for functions $h:\mathbb{R}\rightarrow\mathbb{R}^m$ 
\cite{gri_wal__eva}. Consequently, reverse mode AD is 
utilized for computing derivatives of probability densities, and finds use in 
statistical inference. Reverse mode AD is 
not worse than that of the respective analytical derivatives of a target 
density in terms of complexity, but it poses high
memory requirements. Hybrid AD procedures combining elements of forward and 
backward propagation of derivatives can be 
constructed for achieving a compromise between execution time and memory usage 
when differentiating functions of the form
$h:\mathbb{R}^n\rightarrow\mathbb{R}^m$.

\section{Simulation study}
\label{sim_study}

In this simulation study, GAMC uses the exponential schedule 
\eqref{exp_schedule} to switch randomly between the AM kernel 
specified via mixture \eqref{am_rob_proposal} and the SMMALA kernel.
GAMC is compared empirically against its AM and SMMALA counterparts, as well as 
against MALA, in terms of mixing and cost 
per step via three examples. The examples revolve around a multivariate 
t-distribution with correlated coordinates, and two 
planetary systems, one with a single planet and one with two planets. 

Ten chains are generated by each sampler for each example. $1.1\times 10^5$ 
iterations are run for the realization of each 
chain, of which the first $10^4$ are discarded as burn-in, so $m=10^5$ samples 
per chain are retained in subsequent 
descriptive statistics.

To assess the quality of mixing of a sampler, the ESS of each chain generated 
by the sampler is computed.
% The quality of mixing of a sampler is assessed by computing the ESS of each 
%chain generated by the sampler.
The ESS of a coordinate of the vector $\theta\in\mathbb{R}^{n}$ of parameters 
is defined as
%\begin{equation*}
$\mbox{ESS}_m =
n_m\hat{\sigma}^2_{\mbox{\tiny IID}}/\hat{\sigma}^2_{\mbox{\tiny MC}}$,
%\end{equation*}
where $\hat{\sigma}^2_{\mbox{\tiny IID}}$ and $\hat{\sigma}^2_{\mbox{\tiny 
MC}}$ denote the estimated ordinary and Monte 
Carlo variance of the chain associated with the parameter coordinate.
%The initial monotone sequence estimator of 
%\cite{gey__pra} is used for calculating $\hat{\sigma}^2_{\mbox{\tiny MC}}$.
$\hat{\sigma}^2_{\mbox{\tiny MC}}$
is calculated using the
initial monotone sequence estimator of 
\cite{gey__pra}.

To assess the computational cost of a sampler, the CPU runtime of each chain 
generated by the sampler is recorded. The ESS 
per parameter coordinate and CPU runtime are reported by taking their 
respective means across the set of ten simulated 
chains.

The computational efficiency of a sampler is defined as the ratio of minimum 
ESS among all $n$ parameter 
coordinates over CPU runtime. Finally, the speed-up of a sampler relatively to 
MALA is set to be the ratio of MALA 
efficiency over the efficiency of the sampler.

% A single configuration is used in GAMC across all three examples.
% More specifically, 
%the SMMALA kernel and mixture AM kernel 
%\eqref{am_rob_proposal} are chosen as $q_{\ell}(\star|\cdot)$ and 
%$q_{\alpha}(\star|\cdot)$ in \eqref{mamala_kernel_01} 
%respectively, and the exponential schedule \eqref{exp_schedule} is employed 
%for randomly switching between SMMALA and AM 
%steps. 
The hyperparameter values $\lambda=0.01$, $\gamma=0.001$ in 
\eqref{am_rob_proposal} and $r=10/m=10^{-4}$ in \eqref{exp_schedule} are used 
across all simulations, as the result of
empirical tuning.
On the other hand, hyperparameter $\beta$ in \eqref{am_rob_proposal} is set via 
empirical tuning in the burn-in phase of 
each chain separately.
% Closed-form derivatives and the proposal covariance matrix $\epsilon^2 
%G^{-1}(\theta)$ are used in the Bayesian 
%logistic regression example.
% On the other hand,
Automatic differentiation and the SoftAbs 
approximation of $\epsilon^2 M^{-1}$ \cite{bet__age} are used in all three 
examples.
% the three more complex examples related to the 
% t-distribution and the planetary systems with one and two planets.

%The associated numerics and visualizations for the three examples are 
%available 
%in
Table \ref{speedup_tables} provides numerical summaries,
while figures \ref{acf_and_mean_figs} and \ref{traceplot_figs} display
visual summaries for the three examples.
% and appendix \ref{descriptives_appendix}, 
% respectively. 
Table \ref{speedup_tables} gathers the ESS, runtime, efficiency and speed-up,
% summaries,
as these arise 
after averaging across the ten simulated chains per sampler.
Figures \ref{acf_and_mean_figs} and \ref{traceplot_figs}
% of appendix  \ref{descriptives_appendix}
visualize
the running mean, autocorrelation and trace of one specimen 
chain per sampler out of the ten simulated chains.

A package, called \texttt{GAMCSampler}, implements GAMC using the 
\texttt{Julia} programming 
language. \texttt{GAMCSampler} is based on \texttt{Klara}, a package for MCMC 
inference written in \texttt{Julia} by one 
of the three authors.
% \texttt{Klara} has been named after Klara Dan von Neumann.
\texttt{GAMCSampler} is open-source 
software available at
\url{https://github.com/papamarkou/GAMCSampler.jl}
along with the three examples of this paper. The packages \texttt{ForwardDiff} 
\cite{rev_lub_pap__for} and
\texttt{ReverseDiff}, which are also 
written in \texttt{Julia}, provide forward and reverse-mode automatic 
differentiation functionality.
%Although \texttt{ReverseDiff} is in theory more pertinent to probabilistic 
%inference,
Among these two AD packages, \texttt{ForwardDiff} has been put into practice in 
the simulations due to being more mature
and more optimized than \texttt{ReverseDiff}.

\begin{table}[tp]
	\centering
	\begin{tabular}{@{} l|r|rrrr|r|r|r @{}}
		\hline\noalign{\vskip 0.5mm}
		\multicolumn{9}{c}{Student's t-distribution} \\
		\noalign{\vskip 0.5mm}\hline\noalign{\vskip 0.5mm}
		\multicolumn{1}{c|}{\multirow{2}{*}{Method}} &
		\multicolumn{1}{c|}{\multirow{2}{*}{AR}} &
		\multicolumn{4}{c|}{ESS} &
		\multicolumn{1}{c|}{\multirow{2}{*}{t}} &
		\multicolumn{1}{c|}{\multirow{2}{*}{ESS/t}} &
		\multicolumn{1}{c}{\multirow{2}{*}{Speed}} \\
		& & \multicolumn{1}{c}{min} & \multicolumn{1}{c}{mean} & 
		\multicolumn{1}{c}{median} & \multicolumn{1}{c|}{max} & & & \\
		\noalign{\vskip 0.5mm}\hline\noalign{\vskip 0.5mm}
		MALA & 0.59 & 135 & 159 & 145 & 234 & 9.33 & 14.52 & 1.00\\
		AM & 0.03 & 85 & 118 & 117 & 155 & 17.01 & 5.03 & 0.35\\
		SMMALA & 0.71 & 74 & 87 & 86 & 96 & 143.63 & 0.52 & 0.04\\
		GAMC & 0.26 & \textbf{1471} & \textbf{1558} & \textbf{1560} & \textbf{1629} 
		& 31.81 & 46.23 & \textbf{3.18}\\
		\noalign{\vskip 0.5mm}\hline\noalign{\vskip 0.5mm}
		\multicolumn{9}{c}{One-planet system} \\
		\noalign{\vskip 0.5mm}\hline\noalign{\vskip 0.5mm}
		\multicolumn{1}{c|}{\multirow{2}{*}{Method}} &
		\multicolumn{1}{c|}{\multirow{2}{*}{AR}} &
		\multicolumn{4}{c|}{ESS} &
		\multicolumn{1}{c|}{\multirow{2}{*}{t}} &
		\multicolumn{1}{c|}{\multirow{2}{*}{ESS/t}} &
		\multicolumn{1}{c}{\multirow{2}{*}{Speed}} \\
		& & \multicolumn{1}{c}{min} & \multicolumn{1}{c}{mean} & 
		\multicolumn{1}{c}{median} & \multicolumn{1}{c|}{max} & & & \\
		\noalign{\vskip 0.5mm}\hline\noalign{\vskip 0.5mm}
		MALA & 0.55 & 4 & 76 & 18 & 394 & 57.03 & 0.07 & 1.00\\
		AM & 0.08 & 1230 & 1397 & 1279 & 2035 & 48.84 & 25.18 & \textbf{378.50}\\
		SMMALA & 0.71 & 464 & 597 & 646 & 658 & 208.46 & 2.23 & 33.45\\
		GAMC & 0.30 & \textbf{1260} & \textbf{2113} & \textbf{2151} & \textbf{3032} 
		& 76.80 & 16.41 & 246.59\\
		\noalign{\vskip 0.5mm}\hline\noalign{\vskip 0.5mm}
		\multicolumn{9}{c}{Two-planet system} \\
		\noalign{\vskip 0.5mm}\hline\noalign{\vskip 0.5mm}
		\multicolumn{1}{c|}{\multirow{2}{*}{Method}} &
		\multicolumn{1}{c|}{\multirow{2}{*}{AR}} &
		\multicolumn{4}{c|}{ESS} &
		\multicolumn{1}{c|}{\multirow{2}{*}{t}} &
		\multicolumn{1}{c|}{\multirow{2}{*}{ESS/t}} &
		\multicolumn{1}{c}{\multirow{2}{*}{Speed}} \\
		& & \multicolumn{1}{c}{min} & \multicolumn{1}{c}{mean} & 
		\multicolumn{1}{c}{median} & \multicolumn{1}{c|}{max} & & & \\
		\noalign{\vskip 0.5mm}\hline\noalign{\vskip 0.5mm}
		MALA & 0.59 & 5 & 52 & 10 & 377 & 219.31 & 0.02 & 1.00\\
		AM & 0.01 & 18 & 84 & 82 & 248 & 81.24 & 0.22 & 9.05\\
		SMMALA & 0.70 & 53 & 104 & 100 & 161 & 1606.92 & 0.03 & 1.37\\
		GAMC & 0.32 & \textbf{210} & \textbf{561} & \textbf{486} & \textbf{1110} & 
		328.08 & 0.64 & \textbf{26.39}\\
		\noalign{\vskip 0.5mm}\hline
	\end{tabular}
	\caption{Comparison of sampling efficacy between MALA, AM, SMMALA and GAMC 
	for the t-distribution,
	one-planet and two-planet system. AR: acceptance rate; ESS: effective 
	sample size; t: CPU runtime in seconds; ESS/t:
	smaller ESS across model parameters divided by runtime; Speed: ratio of 
	ESS/t for MALA over ESS/t for each other sampler.
	All tabulated numbers have been rounded to the second decimal place, apart 
	from effective sample sizes, which have been 
	rounded to the nearest integer.
	The minimum, mean,
	median and maximum ESS across the effective sample sizes of the twenty, six 
	and eleven parameters
	(associated with the respective t-distribution, one-planet and two-planet 
	system) are 
	displayed.
}
\label{speedup_tables}
\end{table}

\begin{figure}[tp]
	\centering
	\subfloat[Running mean (t-distribution)]{
		\label{acf_and_mean_figs_c}
		\includegraphics[width=2.3686in]{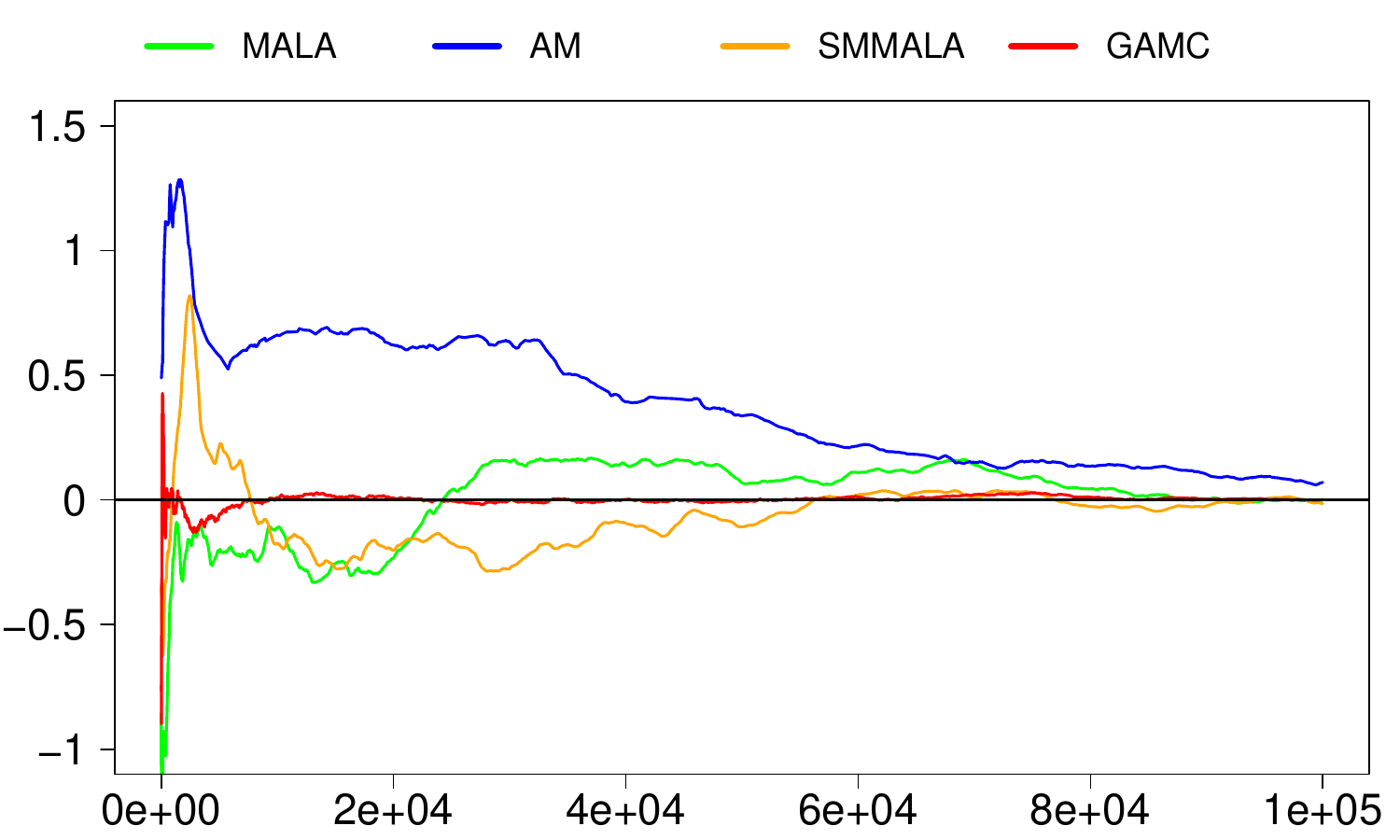}
	} 
	\subfloat[Autocorrelation (t-distribution)]{
		\label{acf_and_mean_figs_d}
		\includegraphics[width=2.3686in]{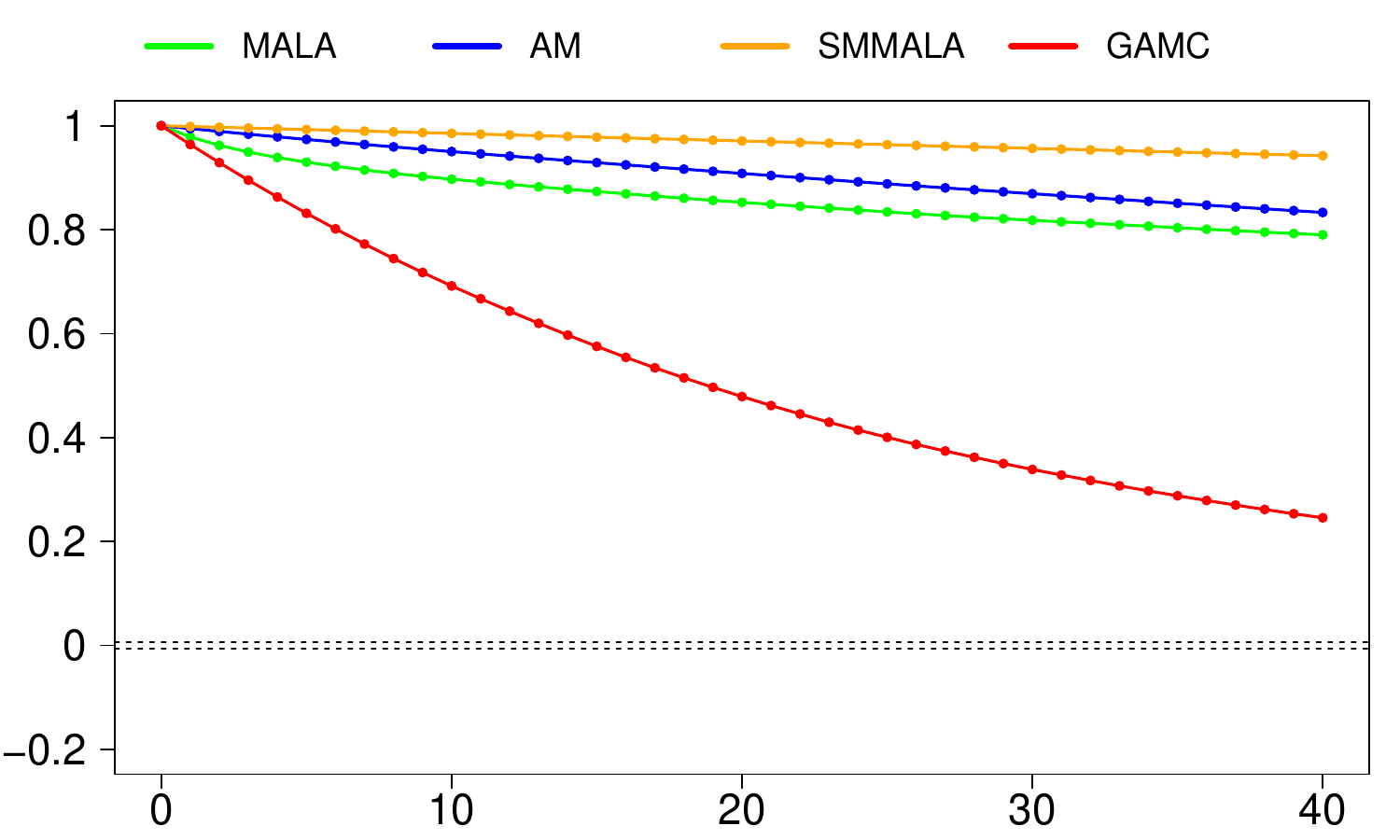}
	} \\
	\subfloat[Running mean (one-planet system)]{
		\label{acf_and_mean_figs_e}
		\includegraphics[width=2.3686in]{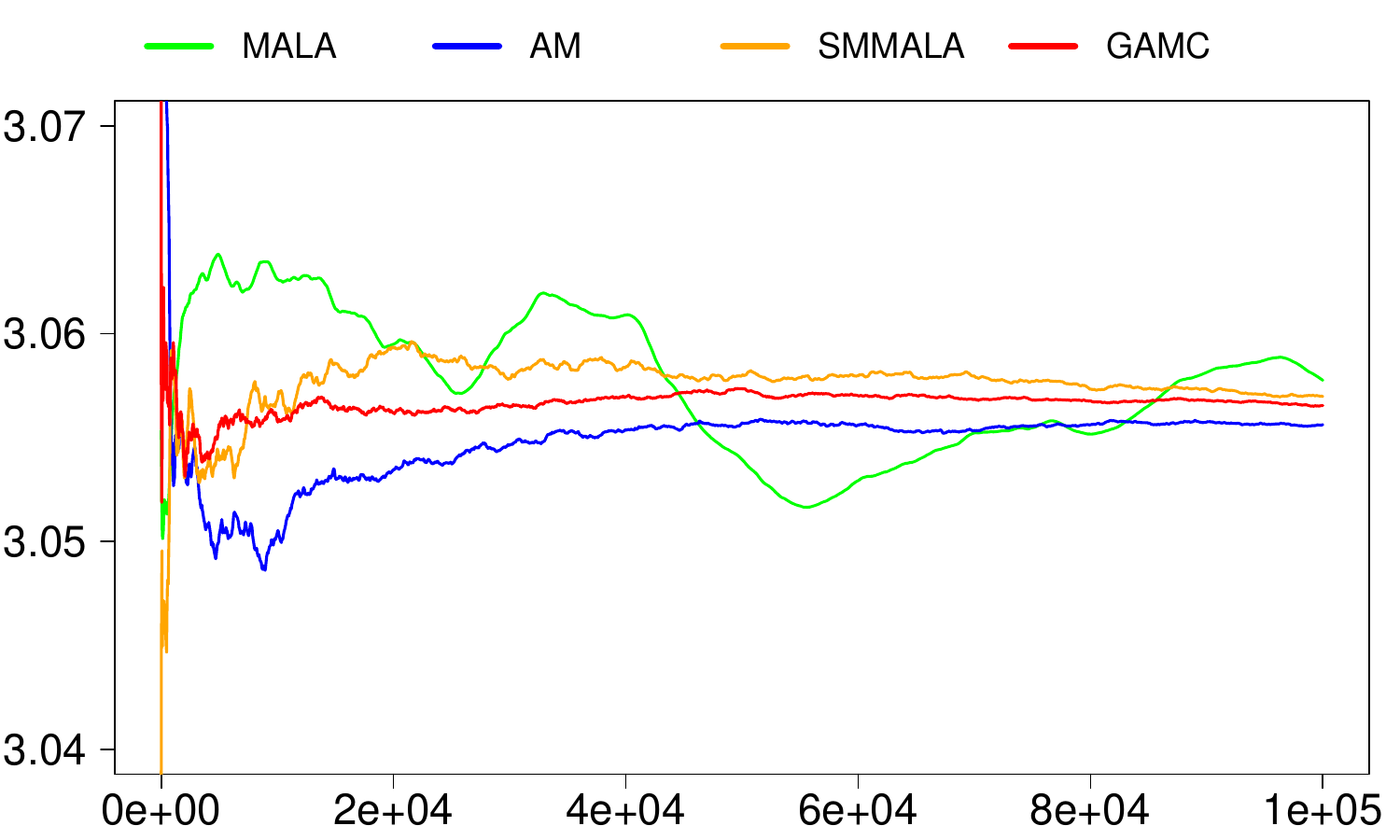}
	} 
	\subfloat[Autocorrelation (one-planet system)]{
		\label{acf_and_mean_figs_f}
		\includegraphics[width=2.3686in]{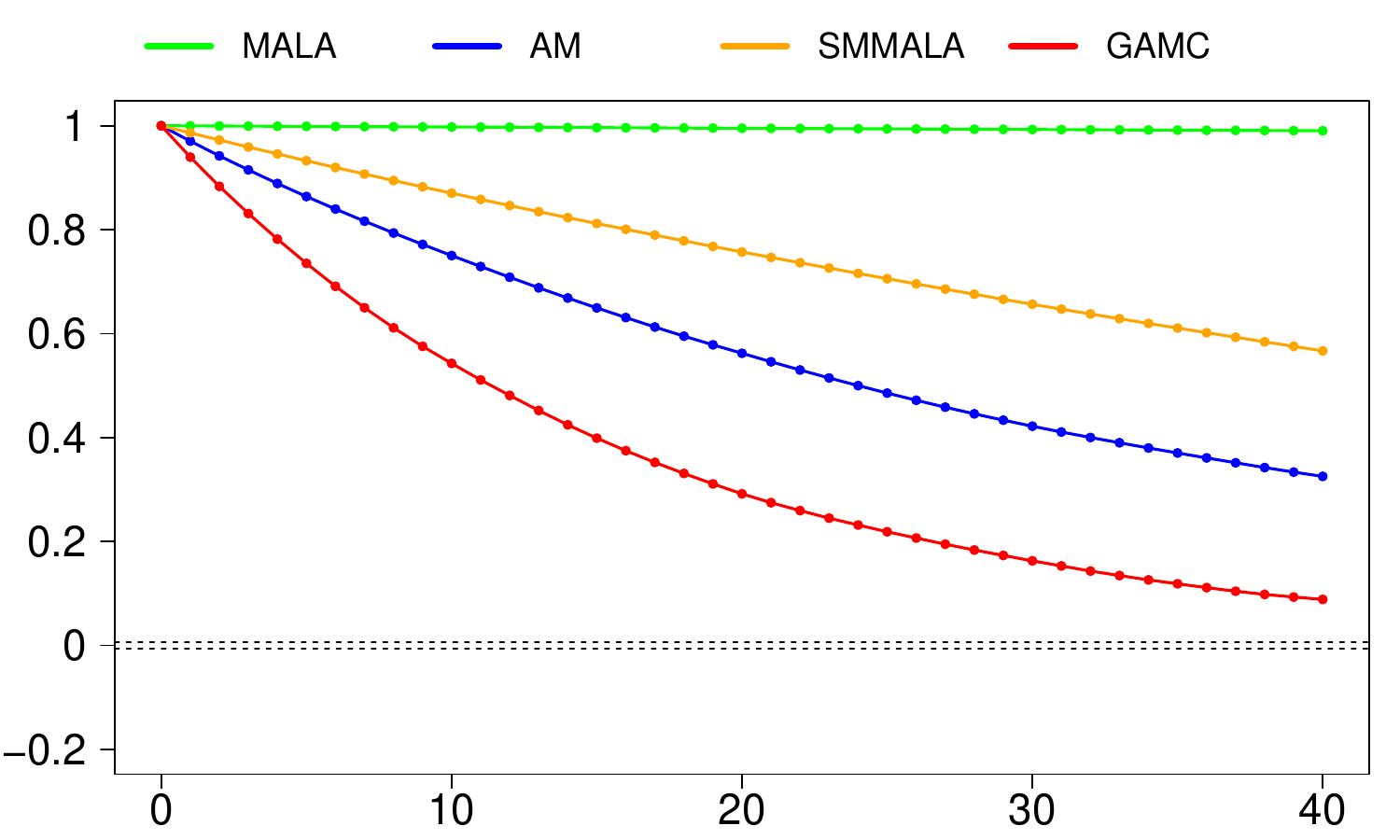}
	} \\
	\subfloat[Running mean (two-planet system)]{
		\label{acf_and_mean_figs_g}
		\includegraphics[width=2.3686in]{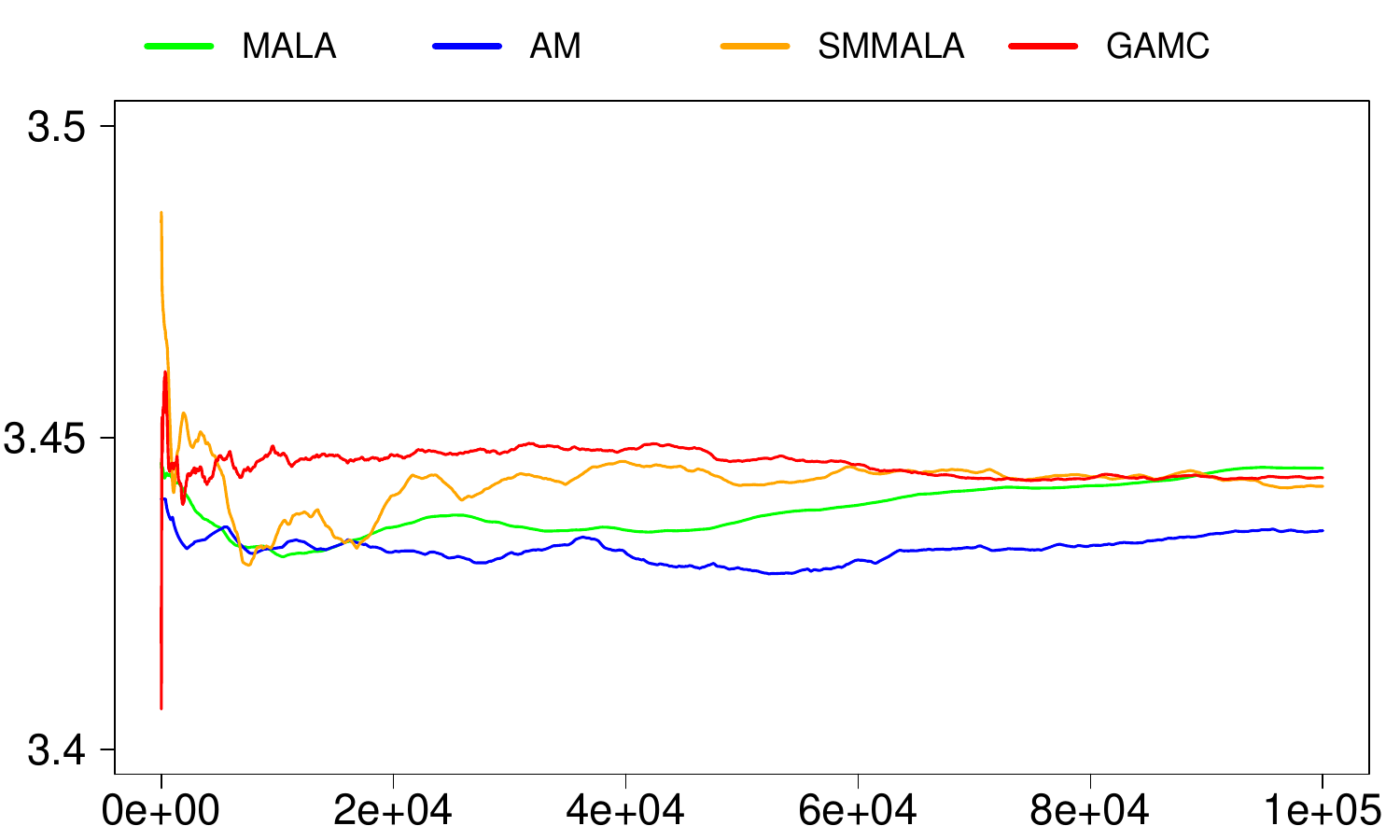}
	} 
	\subfloat[Autocorrelation (two-planet system)]{
		\label{acf_and_mean_figs_h}
		\includegraphics[width=2.3686in]{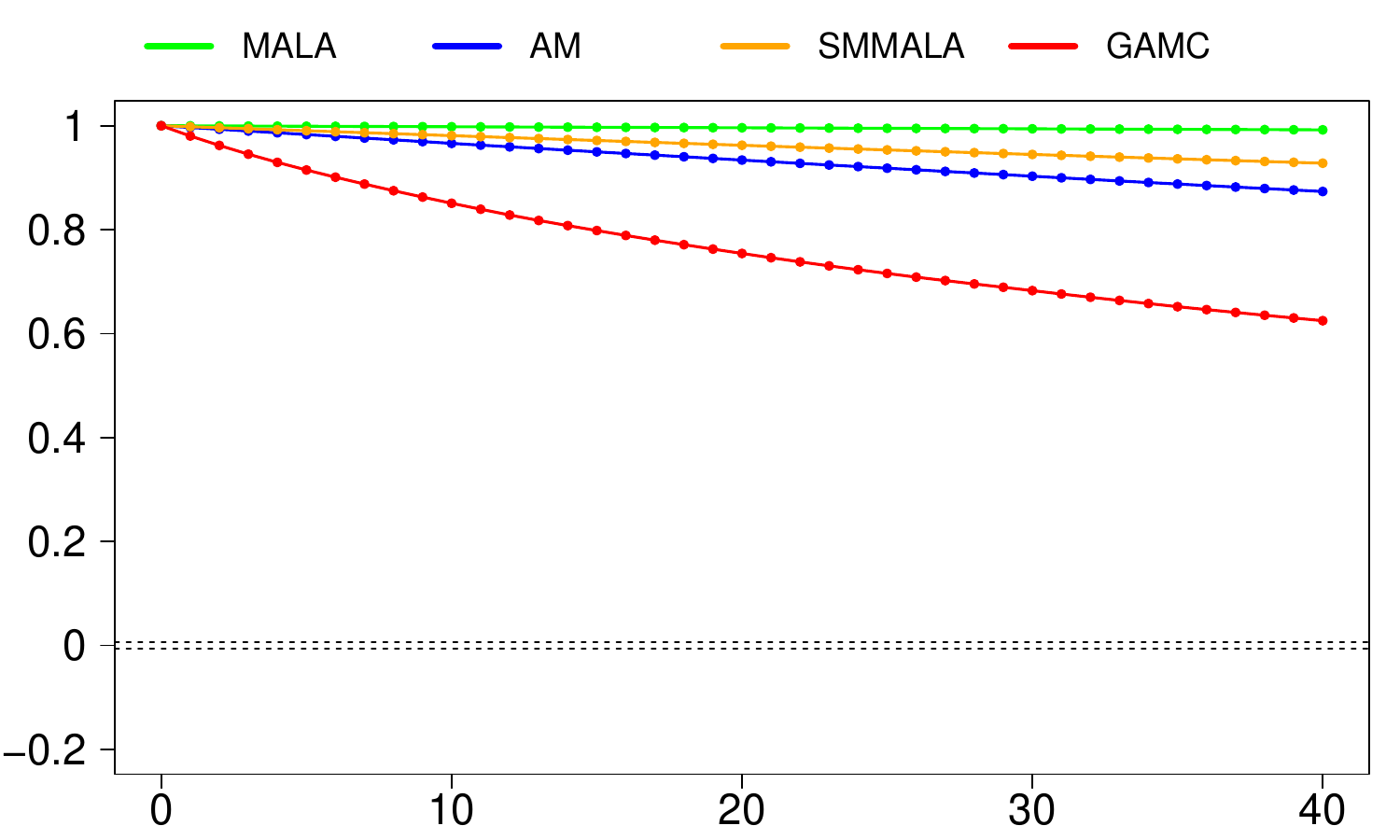}
	} \\
	\caption{Overlaid running means
	as a function of Monte Carlo iteration
	and overlaid linear autocorrelations of 
		single chains corresponding to one of the 
		twenty, six and eleven parameters of the respective t-distribution, 
		one-planet and two-planet system. 
		The black horizontal line in the t-distribution running mean plot 
		represents the true mode.}
	\label{acf_and_mean_figs}
\end{figure}

% \subsection{Visual summaries}

\begin{figure}[tp]
	\centering
	\subfloat[MALA traceplot (t-distribution)]{
		\label{traceplot_figs_a}
		\includegraphics[width=2.3686in]{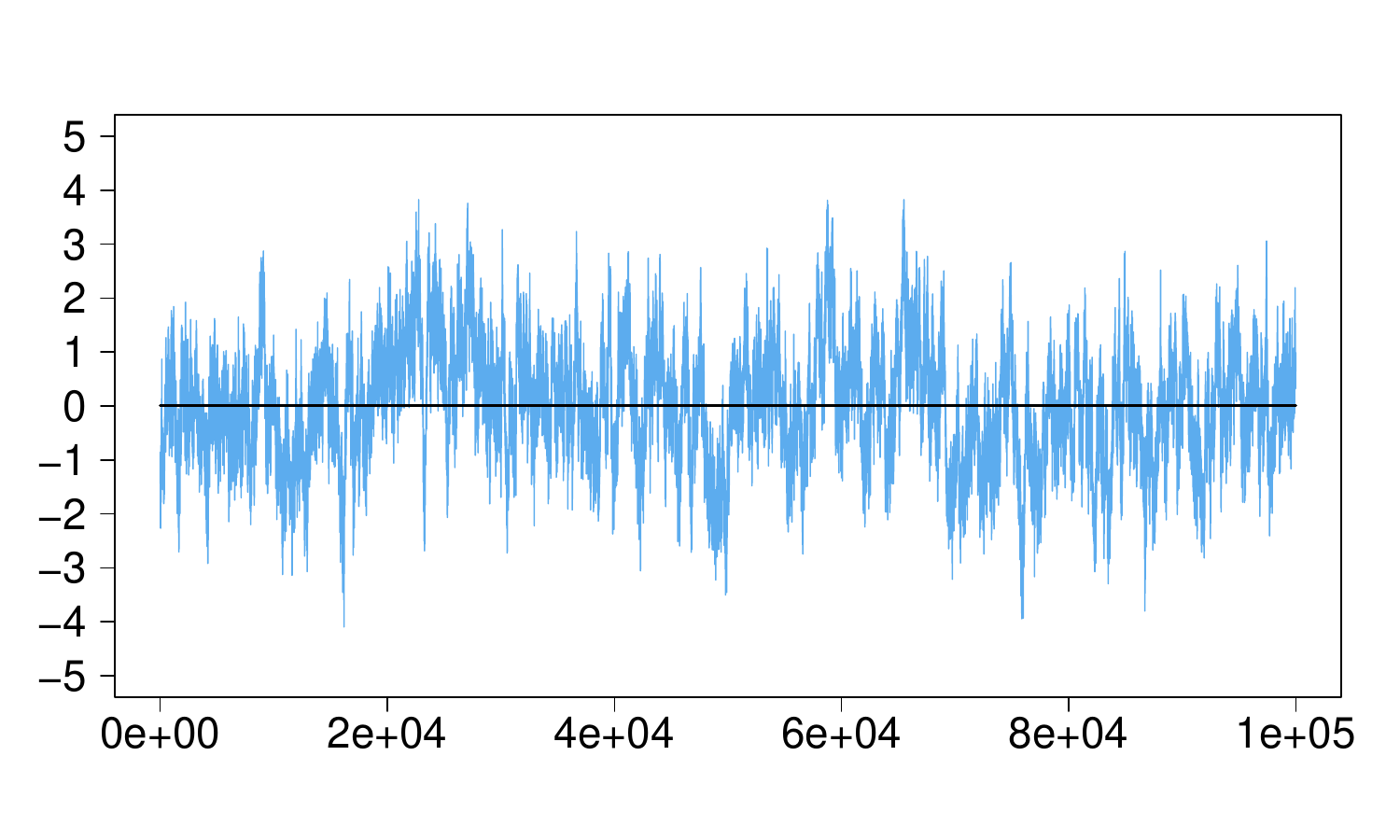}
	} 
	\subfloat[MALA traceplot (two-planet system)]{
		\label{traceplot_figs_b}
		\includegraphics[width=2.3686in]{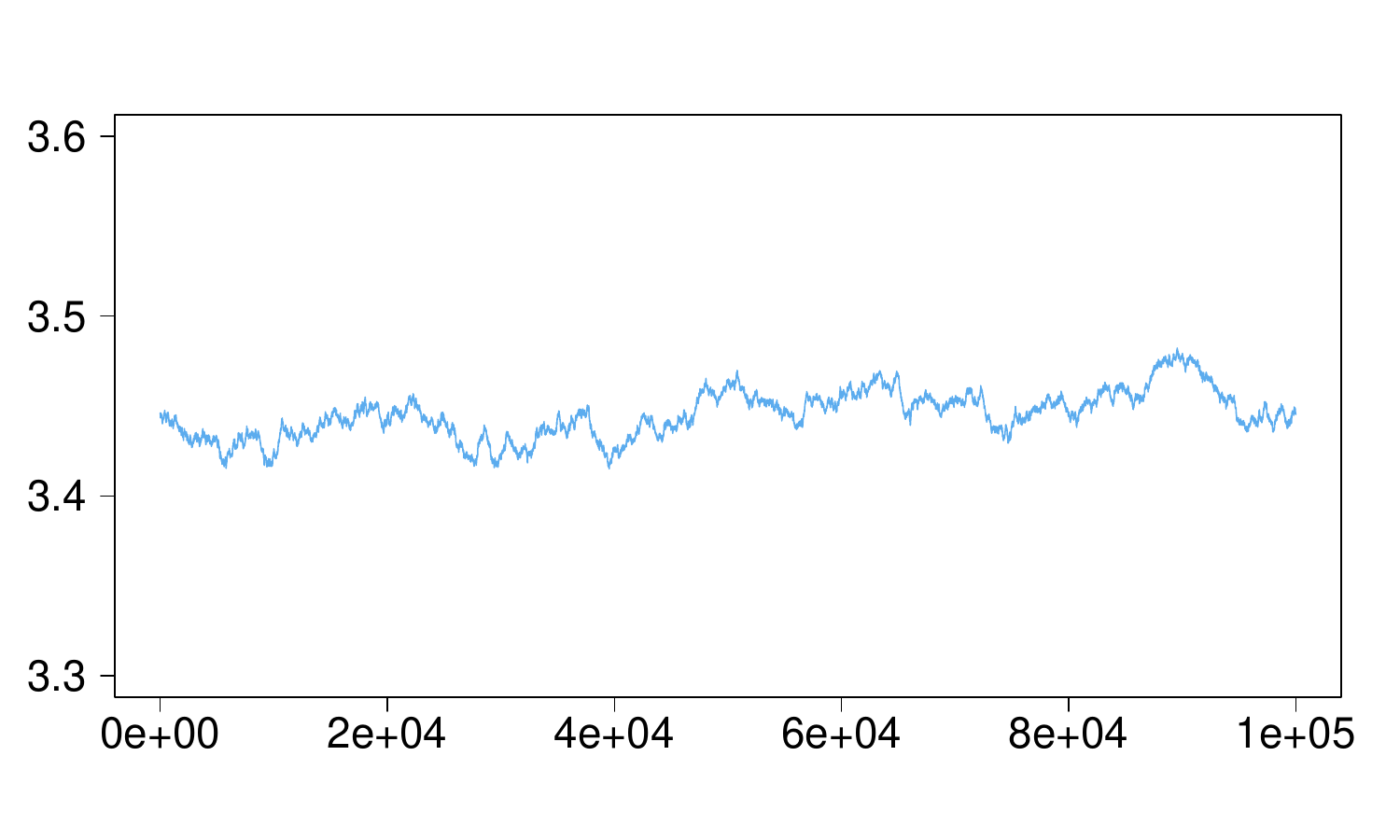}
	} \\
	\subfloat[AM traceplot (t-distribution)]{
		\label{traceplot_figs_c}
		\includegraphics[width=2.3686in]{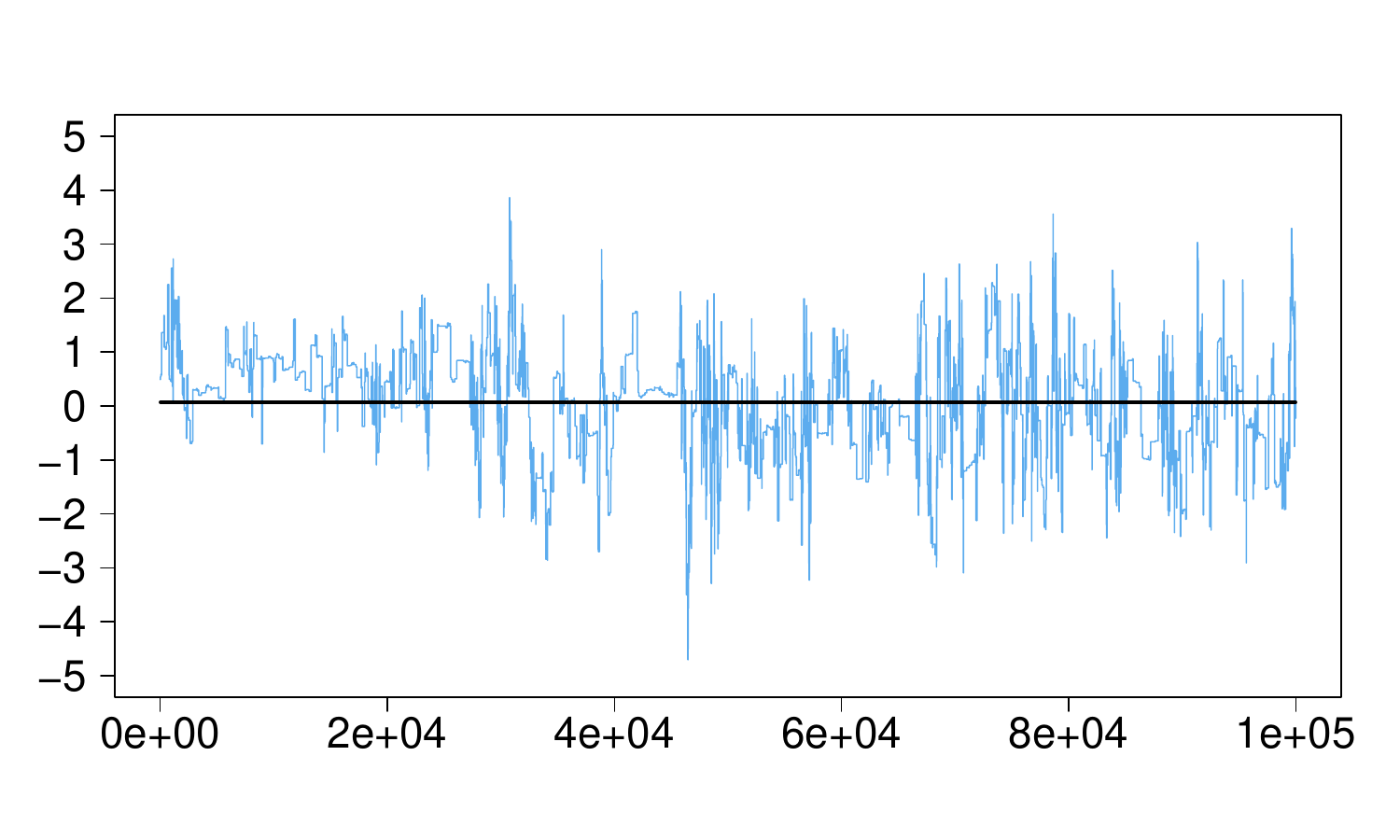}
	} 
	\subfloat[AM traceplot (two-planet system)]{
		\label{traceplot_figs_d}
		\includegraphics[width=2.3686in]{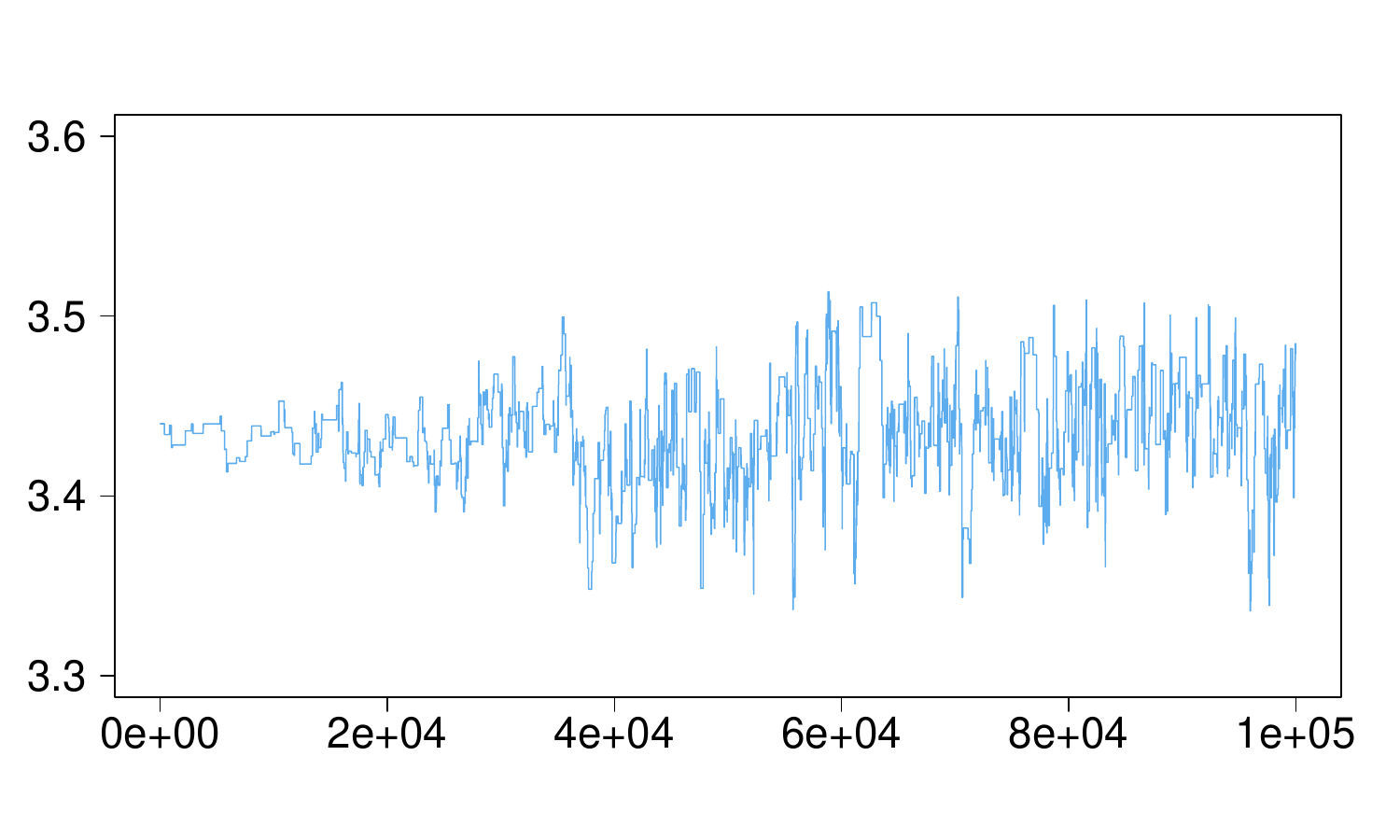}
	} \\
	\subfloat[SMMALA traceplot (t-distribution)]{
		\label{traceplot_figs_e}
		\includegraphics[width=2.3686in]{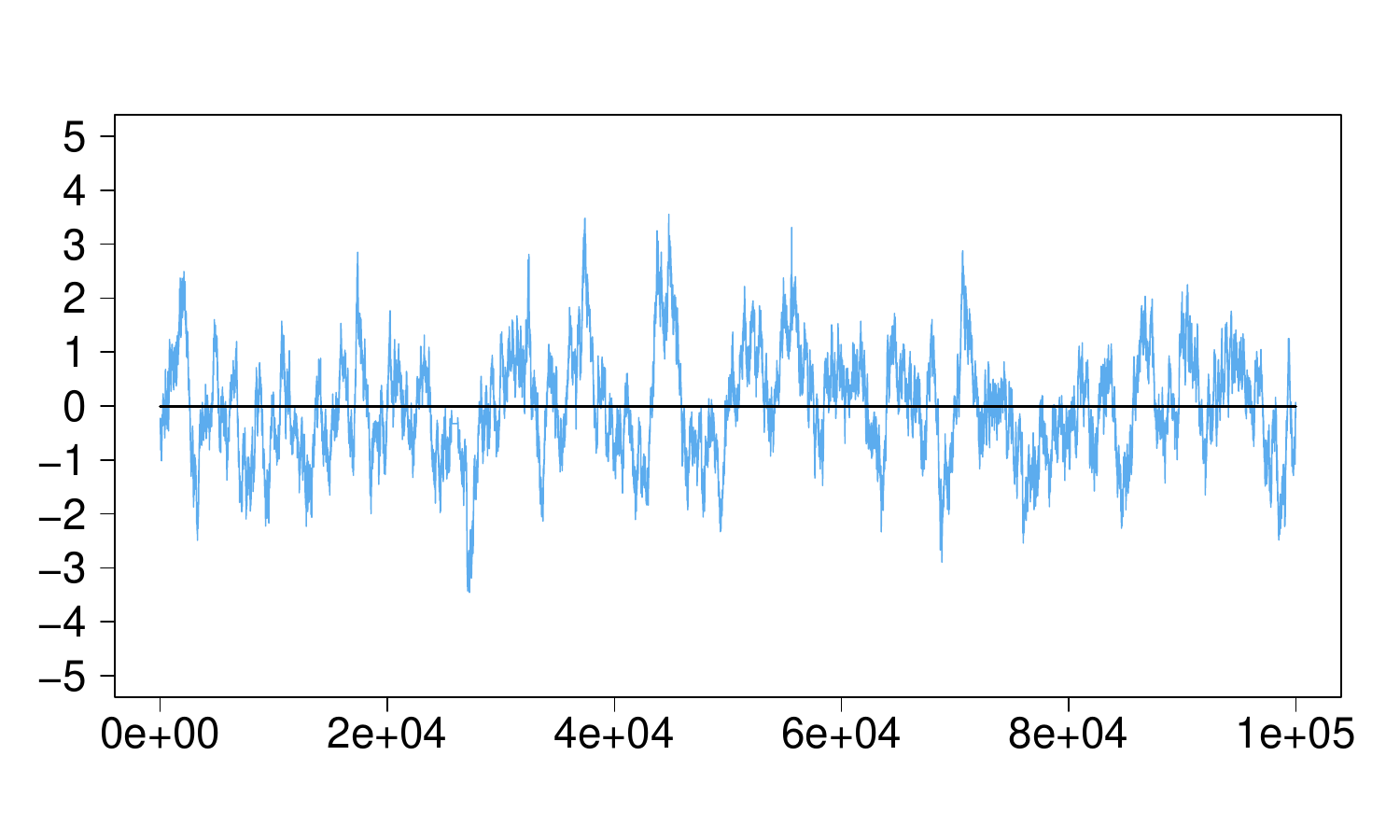}
	} 
	\subfloat[SMMALA traceplot (two-planet system)]{
		\label{traceplot_figs_f}
		\includegraphics[width=2.3686in]{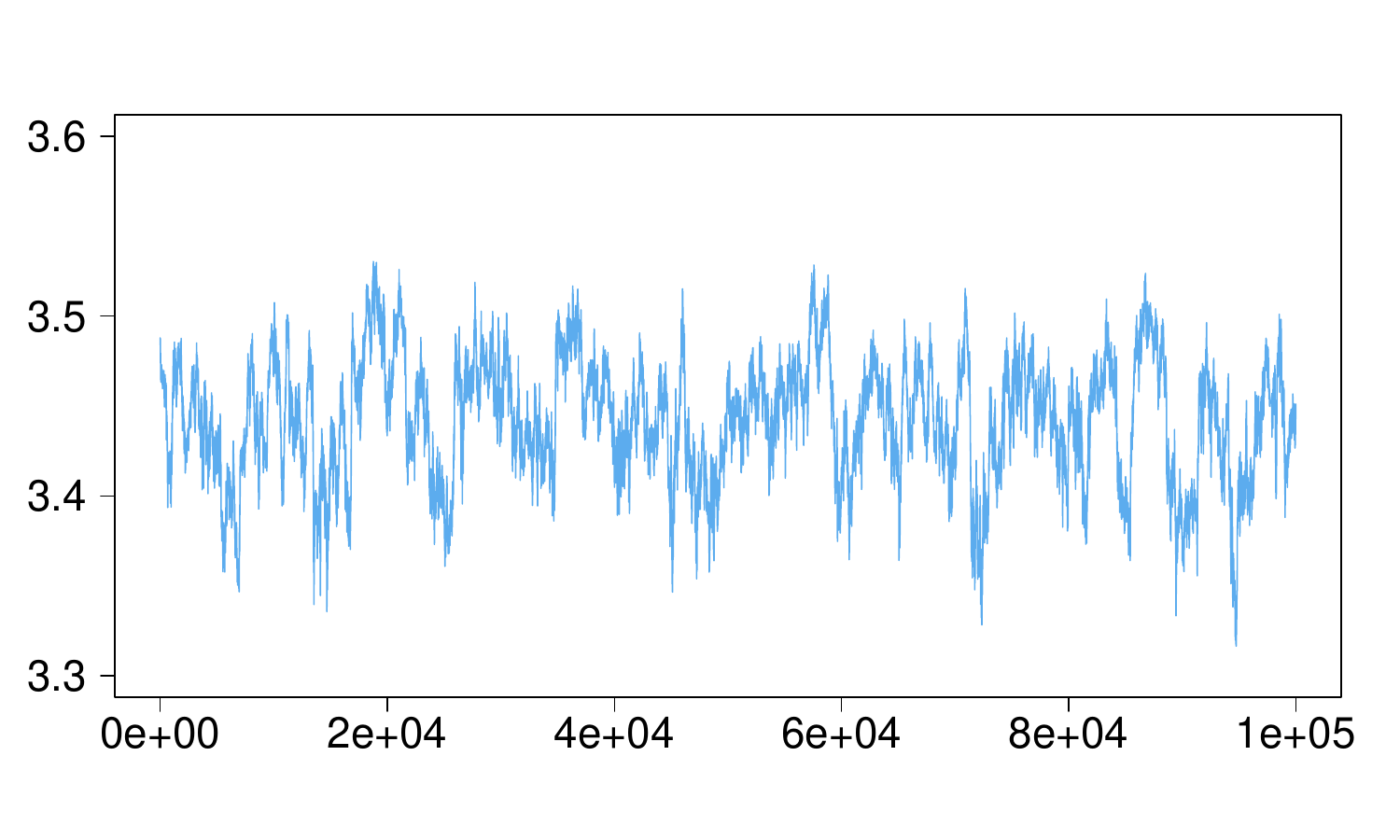}
	} \\
	\subfloat[GAMC traceplot (t-distribution)]{
		\label{traceplot_figs_g}
		\includegraphics[width=2.3686in]{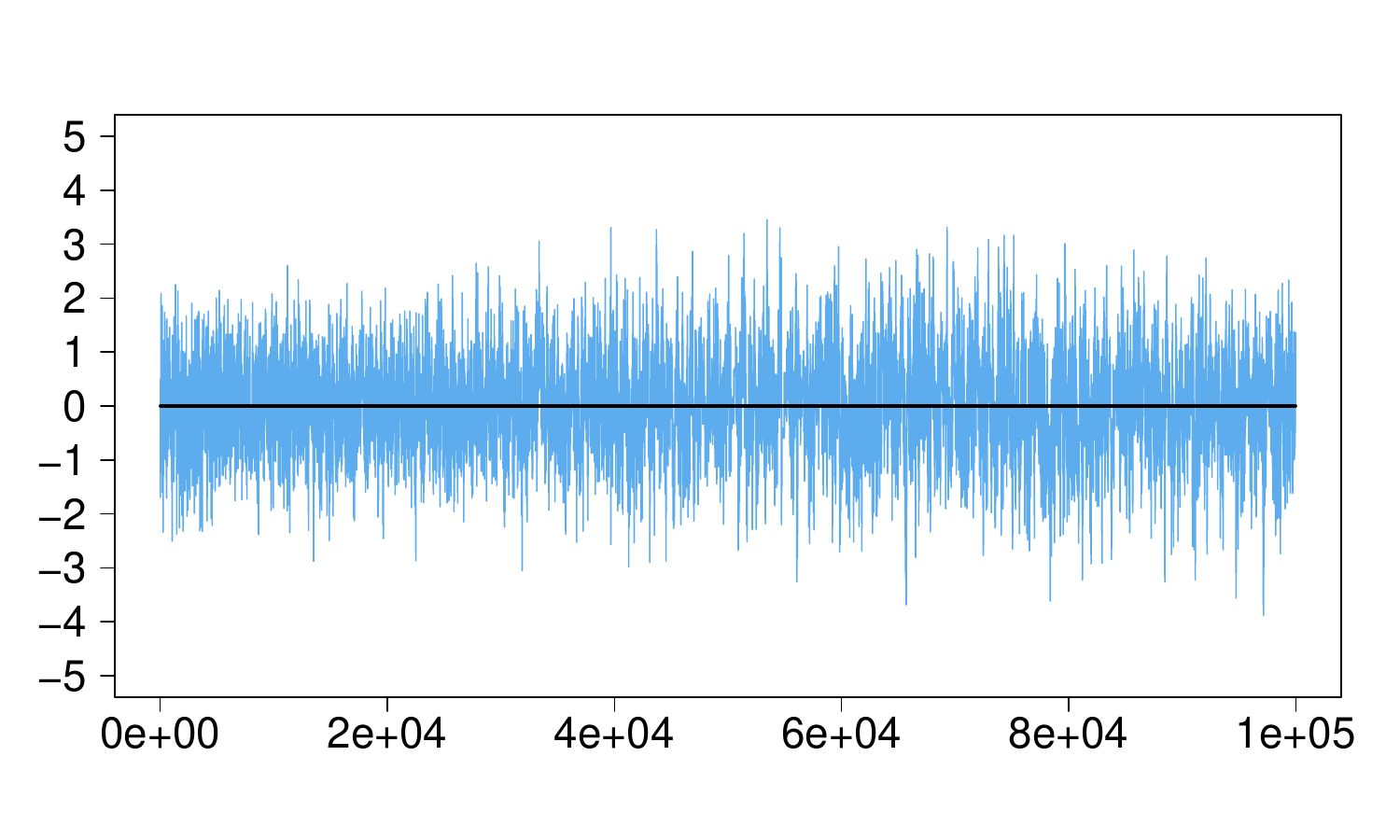}
	} 
	\subfloat[GAMC traceplot (two-planet system)]{
		\label{traceplot_figs_h}
		\includegraphics[width=2.3686in]{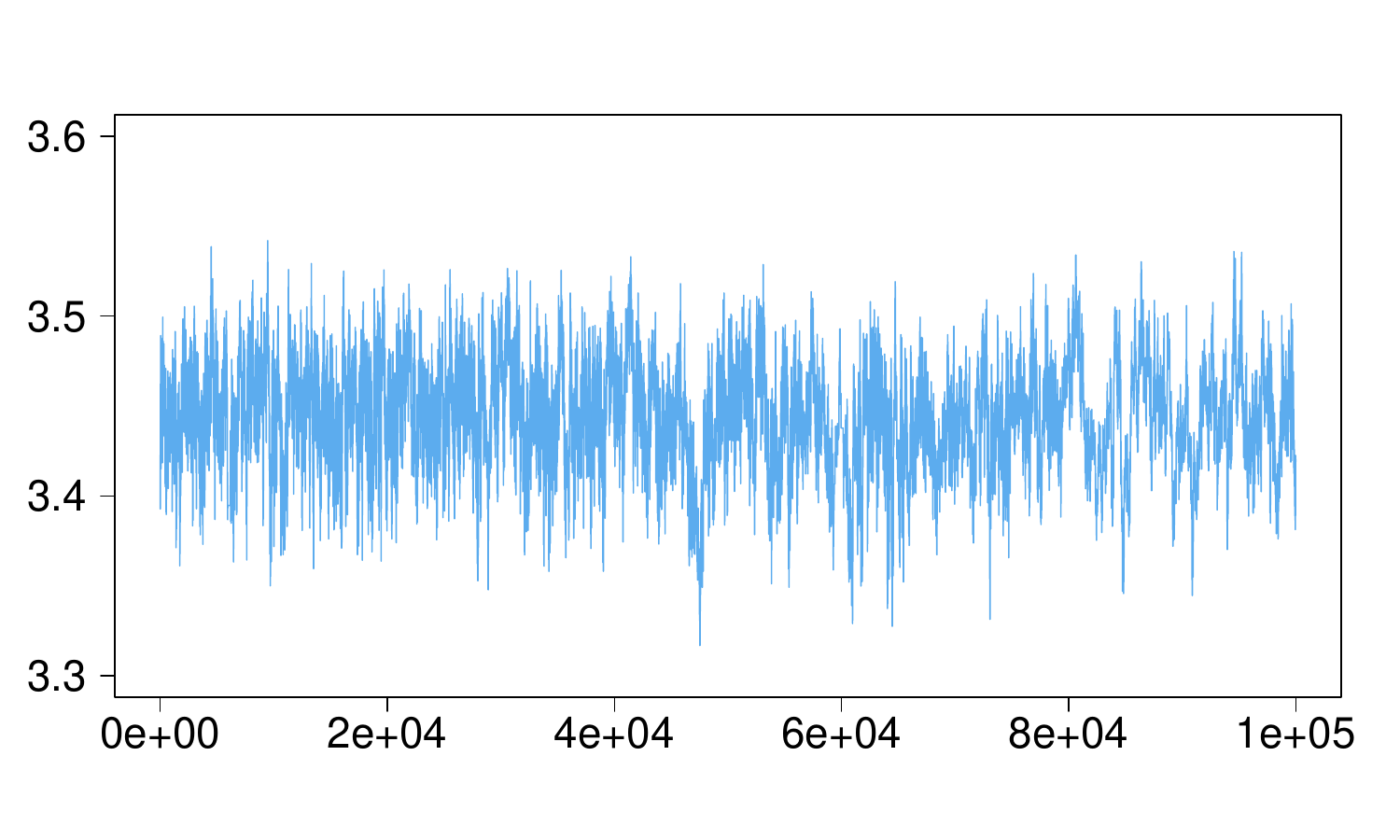}
	} \\
	\caption{Trace plots of single chains
	as a function of Monte Carlo iteration
	corresponding to one of the twenty and 
		eleven parameters of the respective 
		t-distribution and two-planet system. The same chains were used for 
		generating the trace plots of figure 
		\ref{traceplot_figs} and the associated running means and autocorrelations 
		of figure \ref{acf_and_mean_figs}. The 
		black horizontal lines in the t-distribution trace plots represent the true 
		mode.}
	\label{traceplot_figs}
\end{figure}

\subsection{Multivariate t-distribution}

Monte Carlo samples are drawn from an $n$-dimensional Student-$t$ target
$t_{\nu}(0,\frac{\nu-2}{\nu}\Sigma(\xi))$ with $\nu$ degrees of freedom and 
covariance matrix
\begin{equation}
\label{tp:eq:normal:sigma}
\Sigma(\xi)=\left(\begin{array}{ccccc}
1 & \xi^1 & \dots & \xi^{n-2} & \xi^{n-1} \\
\xi^1 & 1 & \dots & \xi^{n-3} & \xi^{n-2} \\
% c^2 & c^1 & \dots & c^{n-4} & c^{n-3} \\
\vdots & \vdots & \ddots & \vdots & \vdots \\
% c^{n-3} & c^{n-4} & \dots & \xi^{1} & \xi^{2} \\
\xi^{n-2} & \xi^{n-3} & \dots & 1 & \xi^{1} \\
\xi^{n-1} & \xi^{n-2} & \dots & \xi^1 & 1 \\
\end{array}\right)
\end{equation}
for some constant $0<\xi<1$ that determines the level of correlation between 
parameter coordinates. The elements of the 
$i$-th diagonal of the $n\cdot n$ covariance matrix $\Sigma(\xi)$ equal 
$\xi^{i-1},~i=1,2,\dots,n$.
The scale matrix $\frac{\nu-2}{\nu}\Sigma(\xi)$ of the $t$-distribution scales 
$\Sigma(\xi)$ by a factor of 
$\frac{\nu-2}{\nu}$ so that the covariance matrix of the $t$-distribution is 
$\Sigma(\xi)$.

In this example, the setting is not Bayesian, so there is no prior distribution 
involved \cite{pap_mir_gir__mon}. Instead, 
MCMC sampling acts as a random number generator to simulate from a 
$t$-distribution 
$t_{\nu}(0,\frac{\nu-2}{\nu}\Sigma(\xi))$. The simulated chains are randomly 
initialized away from the zero mode of 
the $t$-distribution, and they are expected to converge to zero. In other 
words, the zero mode of 
$t_{\nu}(0,\frac{\nu-2}{\nu}\Sigma(\xi))$ is seen as the parameter vector to be 
estimated.

The dimension of the $t$-target is set to $n=20$, relatively high correlation 
is induced by selecting $\xi=0.9$ in 
\eqref{tp:eq:normal:sigma}, and some amount 
of probability mass is maintained in the $t$-distribution tails by choosing 
$\nu=30$ degrees of freedom. The present example 
does not reach the realm of a fully-fledged application, especially in terms of 
log-target complexity, yet it gives a first 
indication of some common computational costs appearing in more realistic 
applications, including automatic differentiation 
and SoftAbs metric evaluations.

Figure \ref{acf_and_mean_figs_c} displays the running means of four chains that 
correspond to the seventeenth coordinate 
$\theta_{17}$ of the twenty-dimensional parameter
$\theta\sim t_{\nu}(0,\frac{28}{30}\Sigma(0.9))$, with a single chain generated 
by each of MALA, AM, 
SMMALA and GAMC. The running mean of the chain simulated using 
MALA does not appear to converge rapidly to the true mode of zero. This accords 
with theoretical knowledge. In particular,
\cite{rob_twe__exp} and \cite{liv_gir__inf} have shown that if a target density 
has tails heavier than exponential or 
lighter than Gaussian, then a MALA proposal kernel does not yield a 
geometrically ergodic Markov chain. Furthermore, it can 
be seen that the chain generated by GAMC converges faster than the chains 
produced by AM, MALA and SMMALA.

Table \ref{speedup_tables} reports the minimum, mean, median and maximum ESS of 
the $n=20$ parameter coordinates. 
As seen from table \ref{speedup_tables}, GAMC achieves roughly ten times larger 
ESS in comparison to AM, MALA and SMMALA 
for the t-distribution example. Figures \ref{acf_and_mean_figs_d}, 
\ref{traceplot_figs_a}, \ref{traceplot_figs_c}, 
\ref{traceplot_figs_e} and \ref{traceplot_figs_g} show the autocorrelation and 
trace plots of the four chains with running 
means presented by figure \ref{acf_and_mean_figs_c}. Figure 
\ref{acf_and_mean_figs_d} demonstrates that GAMC has the 
lowest autocorrelation among the four compared samplers. The trace plots 
provide further circumstantial evidence of the 
faster mixing of GAMC for the Student-t target. The mixing properties of GAMC 
in the case of t-distribution come as a 
surprise, since GAMC was designed to reduce the cost paid for faster mixing 
rather than to achieve the fastest possible 
mixing in absolute terms. GAMC has shorter CPU runtime in comparison to SMMALA, 
but longer runtime than MALA and AM.

With a speed-up of $3.18$, GAMC is about three times more efficient than MALA 
and orders of magnitude more efficient than
AM and SMMALA for the Student-t target $t_{\nu}(0,\frac{28}{30}\Sigma(0.9))$ of 
this example.

\subsection{Radial velocity of a star in planetary systems}

The study of exoplanets has emerged as an important area of modern astronomy.  
While astronomers utilize a variety of 
different methods for detecting and characterizing the properties of exoplanets 
and their orbits, each of the prolific 
methods to date shares several characteristics. First, translating astronomical 
observations into planet physical and 
orbital properties requires significant statistical analysis. Second, 
characterizing planetary systems with multiple 
planets requires working with high-dimensional parameter spaces. Third, the 
posterior probability densities are often 
complex with correlated parameters, non-linear correlations or multiple 
posterior modes. MCMC has proven invaluable for 
providing accurate estimates of planet properties and orbital parameters and is 
now used widely in the field.

For analyzing simple data sets such as one planet detected at high 
signal-to-noise ratio, the random walk 
Metropolis-Hastings sampler is 
effective and the choice of MCMC sampling algorithm is unlikely to be important 
\cite{for__qua}. For analyzing more 
complex data sets such as a star with several planets, more care is necessary 
to avoid poor mixing of the Markov chains. One 
approach is ``artisinal'' MCMC, where proposal densities are hand-crafted for a 
particular problem by making use of 
physical intuition and validation on simulated data sets \cite{for__im}. 
However, it is desirable to identify more 
sophisticated algorithms that can be efficient with minimal tuning or human 
intervention. Here, GAMC is applied to 
simulated radial velocity planet search data sets so as to illustrate the 
potential of the sampler for future astronomical 
or other scientific applications.

One prolific method for characterizing the orbits of extrasolar planets is the 
radial velocity method.  
Astronomers make a series of precise measurements of the line-of-sight velocity 
of a target star. The velocity of the star 
$v(t)$ changes with time $t$ due to the gravitational tug of any planets 
orbiting it. A basic radial velocity data set 
consists of a list of $n_d$ observation times $t_i$, $i=0,1,\dots,n_d-1$, and 
measured velocities $\hat{v}_i$.
% and measurement uncertainties $\sigma_i$.

The observed velocity $\hat{v}_i$ of the star at time $t_i$ is modelled as the 
unknown velocity $v(t_i)$ plus some
measurement error $\epsilon_i$, as seen in \eqref{observed_v}. For many 
planetary systems with $n_p$ planets, the stellar 
line-of-sight velocity $v(t_i)$ can typically be well approximated by 
\eqref{approximated_v}. Independent Gaussian 
measurement errors $\epsilon_i$ with variances $\sigma_i^2$ are assumed 
according to \eqref{star_error}. % Expressions 
\eqref{observed_v}, \eqref{approximated_v} and \eqref{star_error} introduce the 
following model for the radial velocity of a 
star in a planetary system consisting of $n_p$ planets:
%\begin{eqnarray}
%\label{observed_v}
%\hat{v}_i &=& v(t_i)+\epsilon_i, \\
%\label{approximated_v}
%v(t_i) &=& C + \sum_{j=1}^{n_p} K_j ( \cos{(\omega_j+T(t_i,P_j,e_j,M_{0,j}))} 
%+ e_j \cos{(\omega_j))}, \\
%\label{star_error}
%\epsilon_i &\sim& \mathcal{N}(0, \sigma_i^2).
%\end{eqnarray}
\begin{align}
\hat{v}_i &= v(t_i)+\epsilon_i,
\label{observed_v}\\
v(t_i) &= C\displaystyle\sum_{j=1}^{n_p} K_j ( 
\cos{(\omega_j+T(t_i,P_j,e_j,M_{0,j}))}+ e_j \cos{(\omega_j))},
\label{approximated_v}\\
\epsilon_i &\sim \mathcal{N}(0, \sigma_i^2).
\label{star_error}
\end{align}

In %equation
\eqref{approximated_v},
$C$ is the systemic line-of-sight velocity of the planetary system,
$K_j$ is the velocity amplitude induced by the $j$-th planet,
$P_j$ is the orbital period of the $j$-th planet, 
$e_j$ is the orbital eccentricity of the $j$-th planet,
$M_{0,j}$ is the mean anomaly at time $t_{0}=0$ of the $j$-th planet,
$\omega_j$ is the argument of pericenter of the $j$-th planet, and
$T(t_i,P_j,e_j)$ is the true anomaly at time $t_i$ of the $j$-th planet.

The true anomaly $T$ is an angle that specifies the location of the planet and 
star along their orbit at a given time. The 
true anomaly $T$ is related to the eccentric anomaly $E$ by $\tan(T/2) = 
\sqrt{\frac{1+e}{1-e}} \tan(E/2)$.
The eccentric anomaly $E$ can be calculated from the mean anomaly $M$ from 
Kepler's equation, $M = E - E \sin {(E)}$, and an 
iterative solver. The mean anomaly $M$ increases at a linear rate with time $t$ 
according to the
equation $M(t) = M_0 + 2\pi t/P$.  

A parameter vector
% A five-dimensional parameter vector
%\begin{equation*} 
$(K_j,P_j,e_j,M_{0,j},\omega_j)$
%\end{equation*}
of length five
is associated with the $j$-th planet,
as seen from \eqref{approximated_v}. Thus, a total of 
$n=5n_p+1$ model parameters
\begin{equation*}
\theta=(
C,
(K_1,P_1,e_1,M_{0,1},\omega_1),
\ldots,(K_{n_p},P_{n_p},e_{n_p},M_{0,{n_p}},\omega_{n_p}))
\end{equation*}
appear in a planetary 
system with $n_p$ planets. The notation $v(t_i,\theta)$ can be used in place of 
$v(t_i)$ to indicate that the 
stellar line-of-sight velocity \eqref{approximated_v} of the star depends on 
the parameters $\theta$.

According to \eqref{star_error}, the sum of squares of the normalized 
measurement errors $\epsilon_i/\sigma_i$ follow a 
chi-squared distribution with $n_d$ degrees of freedom,
\begin{equation}
\label{chi_sq}
\sum_{i=0}^{n_d-1}\left(\frac{\epsilon_i}{\sigma_i}\right)^2\sim\chi_{n_d}^2.
\end{equation}
The log-likelihood arises from \eqref{chi_sq} and \eqref{observed_v} as
\begin{equation}
\label{astro_lik}
\mathcal{L}(t,\hat{v},\sigma|\theta) =% \sim exp(-\chi^2/2)\\
-\frac{1}{2}\sum_{i=0}^{n_d-1}
\left(\frac{v(t_i,\theta)-\hat{v}_i}{\sigma_i}\right)^2,
\end{equation}
where
$t=(t_0,t_1,\dots,t_{n_d-1})$,
$\hat{v}=(\hat{v}_0,\hat{v}_1,\dots,\hat{v}_{n_d-1})$,
$\sigma=(\sigma_0,\sigma_1,\dots,\sigma_{n_d-1})$.
It is assumed that the measurement uncertainties $\sigma$ are known, so $t$, 
$\hat{v}$ 
and $\sigma$ make up the available data.

For the relatively simple model described by \eqref{astro_lik} and 
\eqref{approximated_v}, astronomers commonly use a set of 
priors elicited during a 2013 SAMSI program on astrostatistics. Modified 
Jeffreys priors are adopted for the velocity 
amplitudes $K_j$ and orbital periods $P_j$.
Uniform priors are employed for the orbital eccentricities $e_j$, velocity 
offsets $M_{0,j}$ and angle offsets $\omega_j$ according to
$e_j\sim\mathcal{U}[0,1)$, $\omega_j\sim\mathcal{U}[0,2\pi)$, 
$M_{0,j}\sim\mathcal{U}[0,2\pi)$.

GAMC is benchmarked on two simulated data sets, of which one consists of 
$n_p=1$ planet and the other one comprises $n_p=2$
planets. In each case, $n_d=50$ observed velocities $\hat{v}\in\mathbb{R}^{50}$ 
are simulated at time points 
$t\in\mathbb{R}^{50}$ spread uniformly over two years.

\subsubsection{One-planet system}
For the one-planet system, the isochronal velocities $\hat{v}_i$ are simulated 
using $C = 1.0$, $K_1 = 20m/s$, $P_1 = 50$ 
days, $e_1 = 0.2$, $M_{0,1} = \pi/4$, $\omega_1 = \pi/4$ and $\sigma_i = 2m/s$ 
for $i=0,1,\dots,49$. The parameter vector 
for this one-planet system is
\begin{equation*}
\theta=(C,K_1,P_1,e_1,M_{0,1},\omega_1),
\end{equation*}
so $n=6$ parameters are
simulated from the target built upon log-likelihood \eqref{astro_lik}.

Figure \ref{acf_and_mean_figs_e} shows the running means of four chains for the 
velocity amplitude $K_1$ induced by the 
single planet, with one chain generated by each of the four compared samplers. 
SMMALA and GAMC seem to converge to the same
value, although the latter appears to converge faster.

Table \ref{speedup_tables} provides the minimum, mean, median and maximum ESS 
of the $n=6$ astronomical parameters 
$\theta=(C,K_1,P_1,e_1,M_{0,1},\omega_1)$. As seen from table 
\ref{speedup_tables} and figure 
\ref{acf_and_mean_figs_f}, GAMC exhibits the largest ESS and smallest 
autocorrelation and therefore appears to have the 
fastest mixing. In terms of CPU runtime, GAMC is about three times faster than 
SMMALA, but slower than MALA and AM.

Apart from attaining the fastest mixing, GAMC outperforms MALA by a factor of 
$246.59$ in terms of speed-up and SMMALA by 
a factor of even higher order of magnitude. GAMC has the second-best 
efficiency, with AM being the most efficient by 
reaching a $378.50$ speed-up in comparison to MALA. The higher efficiency of AM 
over GAMC for this example is attributed 
to the relatively small dimension $n=6$ of the parameter space. GAMC might 
still be preferred over AM for this 
low-dimensional one-planet system, considering that the former sampler has 
higher ESS and higher acceptance rate than the 
latter at a relatively modest additional computational cost.

\subsubsection{Two-planet system}

For the two-planet system, the isochronal velocities $\hat{v}_i$ are simulated 
using
$C = 1.0$,
$K_1 = 30m/s$, $P_1 = 40$ days, $e_1 = 0.2$, $M_{0,1} = \pi/4$, $\omega_1 = 
\pi/4$,
$K_2 = 30m/s$, $P_2 = 80.8$ days, $e_2 = 0.2$, $M_{0,2} = \pi/4$, $\omega_2 = 
\pi/4$,
%$C = 1.0$, $K_1 = K_2 = 30m/s$,
%$P_1 = 40$ days, $P_1 = 80.8$ days, $e_1 = e_2 = 0.2$, $M_{0,1} = M_{0,2} = 
%\pi/4$, $\omega_1 = \omega_2 = \pi/4$
and
$\sigma_i = 2m/s$ for $i=0,1,\dots,49$. The parameter vector
\begin{equation*}
\theta=(C,K_1,P_1,e_1,M_{0,1},\omega_1,K_2,P_2,e_2,M_{0,2},\omega_2)
\end{equation*}
is associated with this two-planet system, so $n=11$ parameters are
simulated from the target built upon log-likelihood \eqref{astro_lik}.

Figure \ref{acf_and_mean_figs_g} displays the running means of four chains for 
the velocity amplitude $K_1$ induced by 
planet one of the two-planet system, with one chain generated by each of the 
four compared samplers. GAMC seems to 
converge the fastest, followed by SMMALA. AM does not show signs of 
convergence, which is related to the low acceptance rate 
of AM in this example.

Table \ref{speedup_tables} provides the minimum, mean, median and maximum ESS 
of the
% $n=11$ parameters 
%$\theta=(C,K_1,P_1,e_1,M_{0,1},\omega_1,K_2,P_2,e_2,M_{0,2},\omega_2)$.
eleven parameters.
Similarly to the one-planet system, 
GAMC attains the highest ESS, lowest autocorrelation and most rapidly mixing 
trace in the case of the two-planet system, 
as seen from table \ref{speedup_tables} and figures \ref{acf_and_mean_figs_h}, 
\ref{traceplot_figs_b}, 
\ref{traceplot_figs_d}, \ref{traceplot_figs_f} and \ref{traceplot_figs_h}.

The MALA trace plot of figure \ref{traceplot_figs_b} is characterized by slow 
exploration of the state space of parameter 
$K_1$, which is attributed to the small stepsize required for maintaining an 
acceptance rate close to the optimal rate of 
$57.4\%$. Although the AM chain of figure \ref{traceplot_figs_d} takes longer 
proposal steps than its MALA counterpart, AM 
has a very low acceptance rate of $0.01\%$. SMMALA offers a substantial 
improvement in mixing over MALA and AM 
according to figure \ref{traceplot_figs_f}, while GAMC appears to have the most 
rapid mixing among the four samplers 
(figure \ref{traceplot_figs_h}).

GAMC ranks third in absolute runtime behind AM and MALA for the system of two 
planets (table \ref{speedup_tables}). 
However, AM does not work in the case of two planets, since it fails to 
converge and it has a prohibitively low acceptance 
rate of $0.01\%$. Besides, MALA does not seem to converge either and it 
explores the state space very slowly. In fact, the 
superiority of GAMC in this example is depicted by the largest ESS and highest 
overall efficiency, with a relative 
speed-up about $25$ and $20$ times higher than MALA and SMMALA, respectively.

\subsection{Synopsis of empirical results from simulations}

GAMC has the highest ESS and thus the fastest mixing in all three examples.
This empirical finding might indicate that some random proposal kernels 
or combinations of proposal kernels have better mixing properties than proposal 
mechanisms based on solitary geometric 
kernels.

In the two most computationally demanding
% out of the three 
examples 
(t-distribution and two-planet system), GAMC manifested
its capacity to achieve the highest speed-up among its competing samplers. 
Thus, combining kernels might help achieve high 
mixing per step with low computational cost per step for a range of expensive 
models.

Simulations have led empirically to an optimal acceptance rate between $20\%$ 
and $40\%$ for GAMC. This might
be explained by the fact that AM contributes the majority of Monte Carlo steps 
to GAMC for relatively small values of the
tuning parameter $r$ in \eqref{exp_schedule}.

\section{Discussion}

%MCMC methods have proven critical to the adoption of Bayesian parameter 
%estimation in many scientific and engineering 
%disciplines. Despite advancements in computer hardware technologies, there 
%will always be a tension between model 
%complexity and computational resources. For many realistic applications, 
%traditional MCMC methods struggle even in 
%moderately-high dimensional parameter spaces, due to target densities with 
%atypical local dependencies often induced by 
%latent variables. Geometric MCMC methods hold considerable promise for 
%improving the mixing properties of posterior 
%sampling for such applications. However, existing manifold MCMC algorithms can 
%rapidly become prohibitively expensive with 
%increasing parameter dimensionality for problems with costly model 
%evaluations. Therefore, developing geometric MCMC 
%algorithms that are computationally feasible for such problems has the 
%potential to enable inference for more realistic 
%models.

This paper initiates a conceptually straightforward, yet potentially powerful, 
approach to the problem of making manifold 
MCMC algorithms more computationally accessible. The main idea is to combine 
geometric and non-geometric proposal kernels 
to find a balance between computational cost and fast mixing. GAMC has been 
empirically validated on a t-distribution 
and on % slightly more complex
astronomical models of planetary systems. The
initial simulation studies of this paper,
along with applications of GAMC on exoplanet transit timing variation models in
\cite{tuchow2019},
reveal the 
potential of GAMC in terms of sampling efficiency relative to algorithms such 
as SMMALA. 

MCMC algorithms that exploit geometric information about the posterior shape 
are likely to be more efficient in terms of the 
absolute number of model evaluations. Manifold MCMC methods could make it 
practical to generate posterior samples with 
increased effective sample sizes. Unfortunately, computing partial derivatives 
for every proposal, as required for MMALA or 
SMMALA, would be extremely expensive.

GAMC algorithms have the potential to significantly reduce the number of 
gradient 
and Hessian evaluations, and are thus expected to accelerate computations by 
over an order of magnitude relative to 
SMMALA for expensive models.
Exploring ways of injecting local geometric information in adaptive
or other non-geometric MCMC methods promises to make manifold MCMC more 
amenable to realistic applications.
%Costly models often give rise to multi-modal target densities representing 
%intricate dependencies among parameters. Sampling methods may be embedded in a 
%population MCMC framework to allow better 
%exploration of and convergence to challenging targets (\cite{gir_cal__rie}). 
%Along these lines, future research may use 
%GAMC in place of SMMALA within a population MCMC scheme to perform inference 
%%%on 
%expensive models involving systems of 
%differential equations.
%An alternative line of future research may use random proposal kernels to 
%generate a single chain while exploiting the 
%random environment to perform tempering, instead of generating multiple chains 
%to perform parallel tempering.
%More generally,
GAMC opens up possible avenues of methodological research for building proposal 
mechanisms based on random proposal kernels.

%For acknowledgements section, please don't number the section, please begin it 
%with \section*{Acknowledgements}
\section*{Acknowledgments} 	E.B.F. acknowledges the support of the Eberly 
College of Science, Center for 
Astrostatistics, Institute for CyberScience and Center for Exoplanets and 
Habitable Worlds of Pennsylvania State 
University, USA. The Center for Exoplanets and Habitable Worlds is supported by 
the Pennsylvania State University, the 
Eberly College of Science, and the Pennsylvania Space Grant Consortium. The 
results reported herein benefited from 
collaborations and/or information exchange within the Nexus for Exoplanet 
System Science (NExSS) research coordination 
network sponsored by the Science Mission Directorate (SMD) of NASA.

% You may incorporate your references as follows in your main tex file.
% Using BibTex is not recommended but can be handled.

% \bibliographystyle{AIMS}
% \bibliography{references}

\providecommand{\href}[2]{#2}
\providecommand{\arxiv}[1]{\href{http://arxiv.org/abs/#1}{arXiv:#1}}
\providecommand{\url}[1]{\texttt{#1}}
\providecommand{\urlprefix}{URL }

% \section*{Appendix A: categorical distribution in power posteriors}
%\section*{Appendix A: power posteriors}
% \sname{Appendix A}
%\label{appendix_categorical} 

%\section*{Appendix B: predictive posterior}
%\sname{Appendix B}
%\label{appendix_predictive} 

%\appendix
%\section{Simulation study visualizations}
%\label{descriptives_appendix}

%The appendix is made up of figures \ref{acf_and_mean_figs} and 
%\ref{traceplot_figs}, which 
%provide visual summaries associated with the simulation study of section 
%\ref{sim_study}.

%\medskip
% The data information below will be filled by AIMS editorial staff
% Received March 2021; revised May 2021.
%\medskip

\end{document}